\newtheorem{definition}{Definition}[section]
\newtheorem{theorem}{Theorem}
\newtheorem{lemma}[theorem]{Lemma}
\newcommand{\ie}{\emph{i.e.}\xspace} 
\newcommand{\eg}{\emph{e.g.}\xspace} 
\icmltitlerunning{SDE-Net: Equipping Deep Neural Networks with Uncertainty Estimates}
\begin{document}

\twocolumn[
\icmltitle{SDE-Net: Equipping Deep Neural Networks with Uncertainty Estimates}

\begin{icmlauthorlist}
\icmlauthor{Lingkai Kong}{GT}
\icmlauthor{Jimeng Sun}{UIUC}
\icmlauthor{Chao Zhang}{GT}
\end{icmlauthorlist}

\icmlaffiliation{GT}{School of Computational Science and Engineering, Georgia Institute of Technology, Atlanta,  GA}
\icmlaffiliation{UIUC}{Department  of  Computer   Science,  University  of  Illinois at Urbana-Champaign, Urbana, IL}

\icmlcorrespondingauthor{Lingkai Kong}{lkkong@gatech.edu}
\icmlcorrespondingauthor{Chao Zhang}{chaozhang@gatech.edu}

\icmlkeywords{Machine Learning, ICML}

\vskip 0.3in
]



\printAffiliationsAndNotice{}  

\begin{abstract}
    Uncertainty quantification is a fundamental yet unsolved problem for deep
    learning. The Bayesian framework provides a principled way of uncertainty
    estimation but is often not scalable to modern deep neural nets (DNNs) that
    have a large number of parameters. Non-Bayesian methods are simple to implement
    but often conflate different sources of uncertainties and require huge
    computing resources.  We propose a new method for quantifying uncertainties of
    DNNs from a dynamical system perspective.  The core of our method is to view
    DNN transformations as state evolution of a stochastic dynamical system and
    introduce a Brownian motion term for capturing epistemic uncertainty. Based on this
    perspective, we propose a neural stochastic differential equation model
    (SDE-Net) which consists of (1) a drift net that controls the system to fit the
    predictive function; and (2) a diffusion net that captures epistemic uncertainty.
    We theoretically analyze the existence and uniqueness of the solution to
    SDE-Net. Our experiments demonstrate that the SDE-Net model can outperform
    existing uncertainty estimation methods across a series of tasks where
    uncertainty plays a fundamental role.
    
    \end{abstract}

    \section{Introduction}

    \begin{figure*}[t]
        \centering
        \begin{subfigure}[b]{0.3\linewidth}
          \includegraphics[width=\linewidth]{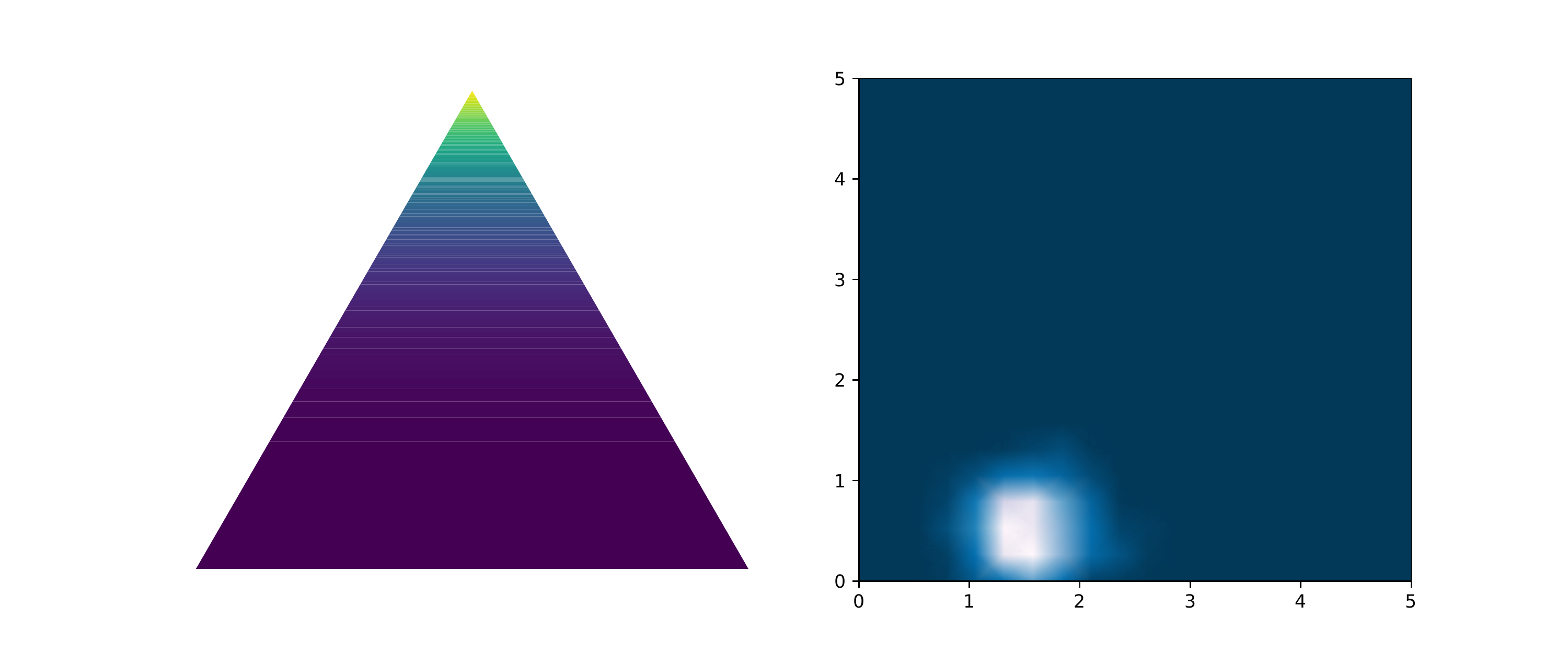}
          \caption{Low aleatoric uncertainty, low epistemic uncertainty.}
          \label{fig:a}
        \end{subfigure}
        \hfil
        \vspace{0.5em}
        \begin{subfigure}[b]{0.3\linewidth}
          \includegraphics[width=\linewidth]{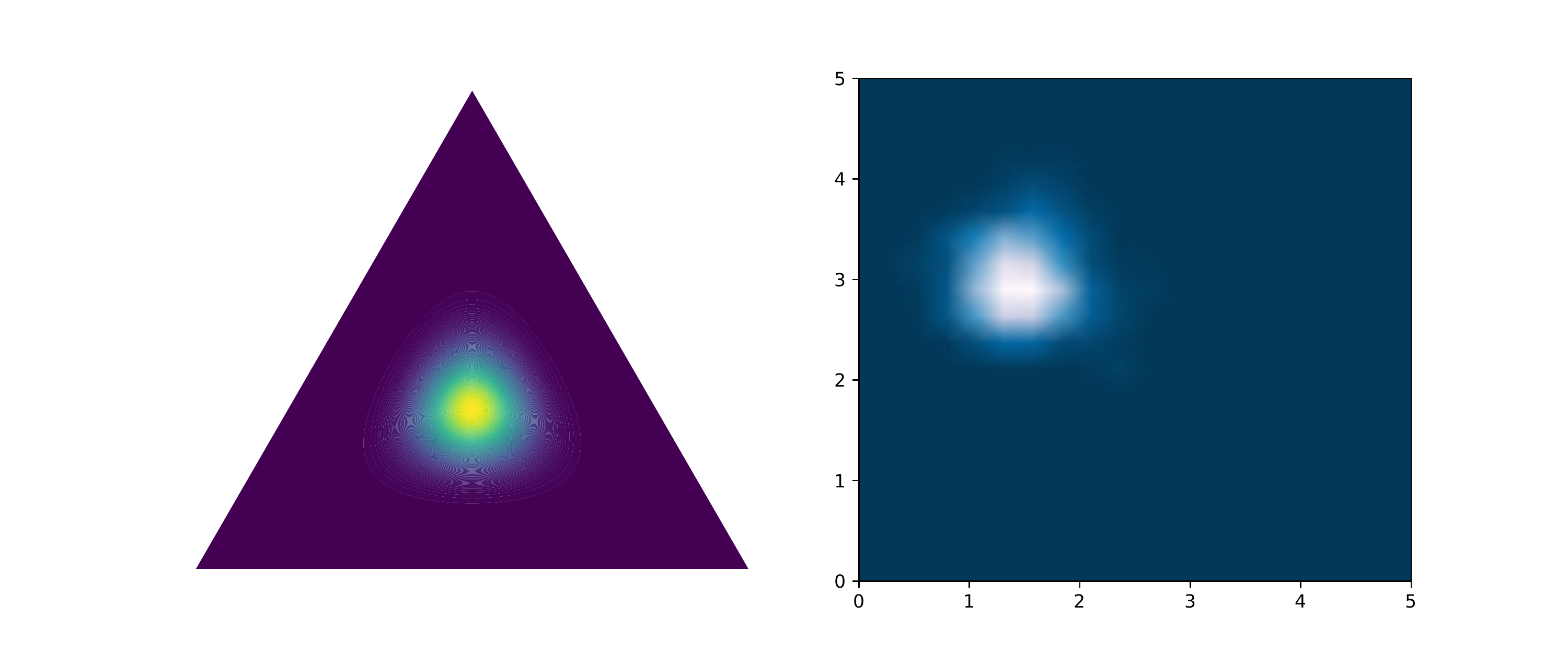}
          \caption{High aleatoric uncertainty, low epistemic uncertainty.}
          \label{fig:b}
        \end{subfigure}
        \hfil
        \begin{subfigure}[b]{0.3\linewidth}
          \includegraphics[width=\linewidth]{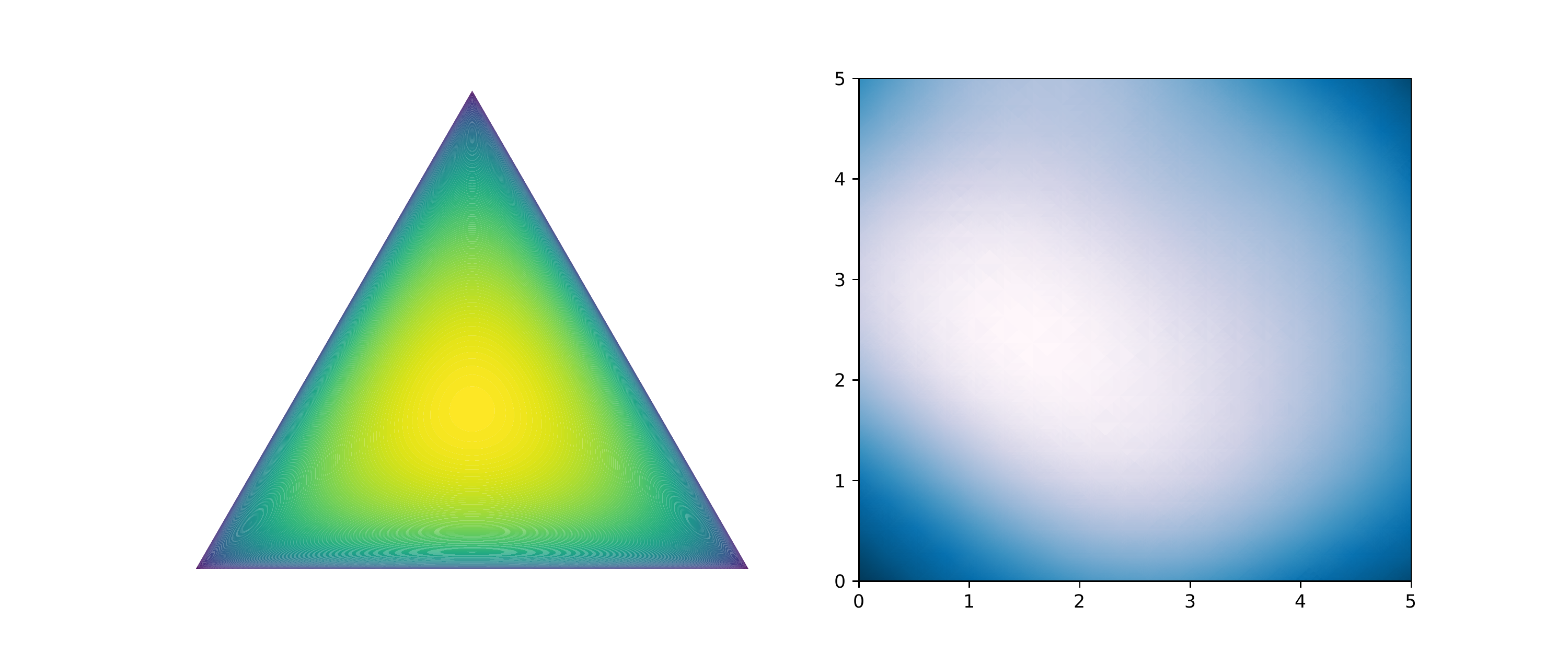}
          \caption{High aleatoric uncertainty, high epistemic uncertainty.}
          \label{fig:c}
        \end{subfigure} 
        \vspace{-3ex}
        \caption{Different behaviors of a probabilistic model under aleatoric and
          epistemic uncertainties for classification and regression tasks.  The heat maps
          represent the distributions of model's predictive distributions.
          The triangles represent classification simplexes and the squares represent regression
          parameter spaces (x-axis is the predictive mean $\mu(\bm{x}^{\ast})$ ;
          y-axis is the predictive variance $\sigma(\bm{x}^{\ast})$). 
        }
        \label{fig:uncertainty-bahavior}
      \end{figure*}

    Deep Neural Nets (DNNs) have achieved enormous success in a wide spectrum of
    tasks, such as image classification \citep{2012:alex}, machine translation
    \citep{2014:Choukroun}, and reinforcement learning \citep{2018:Li}. Despite
    their remarkable predictive performance, DNNs are poor at quantifying
    uncertainties for their predictions. Recent studies have shown that DNNs are
    often overconfident in their predictions and produce mis-calibrated output
    probabilities for classification \citep{Guo:2017}. Moreover, they can make
    erroneous yet wildly confident predictions for out-of-distribution samples that
    are very different from training data \citep{Nguyen_2015_CVPR}. Uncertainty
    quantification, a key component to equip DNNs with the ability of knowing what
    they do not know, has become an urgent need for many real-life applications,
    ranging from self-driving cars to cyber security to automatic medical diagnosis.
    
    Existing approaches to uncertainty quantification for neural nets can be
    categorized into two lines. The first line is based on Bayesian neural nets
    (BNNs) \citep{1991:Denker, 1992:Mackay}. BNNs quantify predictive uncertainty
    by imposing probability distributions over model parameters instead of using
    point estimates.   While BNNs provide a principled way
    of uncertainty quantification, exact inference of parameter posteriors is
    often intractable.  Moreover, specifying parameter
    priors for BNNs is challenging because the parameters of DNNs are huge in size
    and uninterpretable.

    Along another line, several non-Bayesian approaches have been proposed for
    uncertainty quantification. The most prominent idea in this line is model
    ensembling \citep{Lakshminarayanan2016}, which trains multiple DNNs with
    different initializations and uses their predictions for uncertainty
    estimation.  However, training an ensemble of DNNs can be prohibitively
    expensive in practice.  Other non-Bayesian methods \citep{geifman2018biasreduced} suffer from the drawback of conflating
    \textit{aleatoric uncertainty}---the natural randomness inherent in the task,
    with \textit{epistemic uncertainty}---the model uncertainty caused by lack of
    observation data.  In many tasks, it is important to separate these two sources
    of uncertainties.  Taking active learning as an example, one would prefer to
    collect data from regions with high epistemic uncertainty but low aleatoric
    uncertainty \citep{hafner2018reliable}.
    
    We propose a deep neural net model for uncertainty quantification based on
    \emph{neural stochastic differential equation}. Our model, named SDE-Net,
    enjoys a number of benefits compared with existing methods: (1) It explicitly
    models \textit{aleatoric uncertainty} and \textit{epistemic uncertainty} and is
    able to separate the two sources of uncertainties in its predictions; (2) It is
    efficient and straightforward to implement, avoiding the need of specifying
    model prior distributions and inferring posterior distributions as in BNNs; and
    (3) It is applicable to both classification and regression tasks.

    Our model design (Section \ref{sect:method}) is motivated by the connection
    between neural nets and dynamical systems. From the dynamical system
    perspective, the forward passes in DNNs can be viewed as state transformations
    of a dynamic system, which can be defined by an NN-parameterized ordinary
    differential equation (ODE) \citep{Chen2018}. However, neural ODE is
    \emph{deterministic} and cannot capture any uncertainty information. In
    contrast, our model characterizes the transformation of hidden states with
    \emph{stochastic differential equation} (SDE) and adds a Brownian motion term
    to explicitly quantify epistemic uncertainty. Our proposed SDE-Net model thus
    consists of (1) a drift net that parameterizes a differential equation to fit
    the predictive function, and (2) a diffusion net that parameterizes the
    Brownian motion and encourages high diffusion for data outside the training
    distribution.  From a control point of view, the drift net controls the system
    to achieve good predictive accuracy, while the diffusion net characterizes
    model uncertainty in a stochastic environment. We theoretically analyze the
    existence and uniqueness of solution to the proposed stochastic dynamical
    system, which provides insights to design a more efficient and stable network
    architecture. 
    
    Empirical results are presented in Section \ref{sect:exp}. We evaluate four tasks where uncertainty plays a fundamental role: out-of-distribution detection, misclassification detection, adversarial samples detection and active learning. We find that SDE-Net can outperform 
    state-of-the-art uncertainty estimation methods or achieve competitive results across these tasks on various
    datasets.

    \section{Aleatoric Uncertainty and Epistemic Uncertainty} 

    \vspace{0.5em}
    For supervised learning, we are given a training dataset $\mathcal{D}=\{
    \bm{x}_j,y_j\}_{j=1}^N$; we train a model $\mathbf{M}$ parameterized by
    $\bm{\theta}$ and use the model $\mathbf{M}$ to make predictions for any new
    test instance $\bm{x}^{\ast}$. The predictive uncertainty comes from two
    sources \citep{2017:kendall}: \emph{aleatoric uncertainty} and \emph{epistemic
      uncertainty}.  Aleatoric uncertainty represents the natural randomness (\eg,
    class overlap, data noise, unknown factors) inherent in the task and cannot be
    explained away with data; while epistemic uncertainty represents our ignorance
    about model caused by the lack of observation data and is high in regions
    lacking training data.
    \vspace{0.5em}
    
    Figure \ref{fig:uncertainty-bahavior} illustrates the behaviors of a
    probabilistic model under the influence of the two sources of uncertainties:
    (1) When both aleatoric and epistemic uncertainties are low (Figure
    \ref{fig:a}), the model outputs confident predictions with low variance.  This
    makes the output distributions sharply concentrate at a simplex corner (for
    classification) or a small-variance region (for regression); (2) When aleatoric
    uncertainty is high but epistemic uncertainty is low (Figure \ref{fig:b}), the
    predictive distributions concentrate around the simplex center  or
    large-variance regions; (3) When epistemic uncertainty is high (Figure
    \ref{fig:c}), the predictive distributions scatter in a highly diffused way
    over the classification simplex and the regression parameter space.
    
    Bayesian neural networks (BNNs) model epistemic uncertainty by imposing distributions over
    model parameters. They are realized by first specifying prior distributions for
    neural net parameters, then inferring parameter posteriors and further
    integrating over them to make predictions. Unfortunately, such modeling of epistemic uncertainty has two drawbacks.
    First, it is difficult to specify the prior distributions since the parameters of DNNs are uninterpretable.
    Second, exact parameter posterior inference is often intractable due to the large number of parameters in DNNs.  Most approaches for learning
    BNNs fall into one of two categories: variational inference (VI) methods
    \citep{Blundell:2015,louizos2017multiplicative, wu2018deterministic} and Markov chain Monte Carlo (MCMC) methods
    \citep{Welling:2011, Li:2016:PSG:3016100.3016149}.  VI methods require one to
    choose a family of approximating distributions, which may lead to
    underestimation of true uncertainties.  MCMC methods are time-consuming and
    require maintaining many copies of the model parameters, which can be costly
    for large NNs. To overcome such drawbacks, we will propose a more direct and efficient way to model uncertainties.

    \section{Uncertainty Quantification via Neural Stochastic Differential Equation}
    \label{sect:method}
    
    We propose a new uncertainty aware neural net from the
    stochastic dynamical system perspective. The proposed method can distinguish the
    two sources of uncertainties with no need of specifying priors of model parameters and performing complicated Bayesian inference.

    \subsection{Neural Net as Deterministic Dynamical System}
    Our approach relies on the connection between neural nets and dynamic systems,
    which has been investigated in \citep{Chen2018}. As neural nets map an
    input $\bm{x}$ to an output $y$ through a sequence of hidden layers, the hidden
    representations can be viewed as the states of a dynamical system. It is thus
    possible to define a dynamical system by parameterizing its ordinary
    differential equation with a neural net.  To see this, consider the
    transformation between layers in ResNet \citep{2015:He}:
    \begin{equation}
        \bm{x}_{t+1} = \bm{x}_t +f(\bm{x}_t, t),
    \end{equation}
    where $t$ is the index of the layer while $\bm{x}_t$ is the hidden state at layer $t$. We rearrange this equation as $\frac{\bm{x}_{t+\Delta t}-\bm{x}_t}{\Delta t} = f(\bm{x}_t, t) $  where $\Delta t =1$. Letting $\Delta t\to 0$, we obtain:
    \begin{equation}
       \lim_{ \Delta \to 0}  \frac{\bm{x}_{t+\Delta t}-\bm{x}_t}{\Delta t} = \frac{d\bm{x}_t}{dt}=f(\bm{x}_t, t) \Longleftrightarrow d\bm{x}_t = f(\bm{x}_t,t)dt.
    \label{eq:ODE}
    \end{equation}
    The transformations in ResNet can thus be viewed as the discretization of a
    dynamical system, whose continuous dynamics is given by $f(\bm{x}_t,t)$. The
    idea of the neural ODE method \citep{Chen2018} is to parameterize
    $f(\bm{x}_t,t)$ with a neural net and exploit an ODE solver to evaluate the
    hidden unit state wherever necessary.  Such a neural ODE formulation enables
    evaluating hidden unit dynamics with arbitrary accuracy and enjoys better
    memory and parameter efficiency.

    \subsection{Modeling Epistemic Uncertainty with Brownian Motion}
    
    However, neural ODE is a deterministic model and cannot model epistemic
    uncertainty. We develop a neural SDE model to characterize a stochastic
    dynamical system instead of a deterministic one. The core of our neural SDE
    model is to capture epistemic uncertainty with \emph{Brownian motion}, which is
    widely used to model the randomness of moving atoms or molecules in Physics
    \citep{bass_2011}.

    \begin{definition}
    A standard Brownian motion $W_t$ is a stochastic process which satisfies the following properties: a) $W_0=0$; b) $W_t-W_s$ is $\mathcal{N}(0,t-s)$ for all $t \ge s \ge 0$; c) For every pair of disjoint time intervals $[t_1, t_2]$ and $[t_3, t_4]$, with $t_1 < t_2 \le t_3 \le t_4$, the increments $W_{t_4} - W_{t_3}$ and $W_{t_2} - W_{t_1}$ are
    independent random variables.
    \end{definition}

    We add the Brownian motion term into Eq. \eqref{eq:ODE}, which leads to a neural SDE
    dynamical system. The continuous-time dynamics of the system are then expressed as:
    \begin{equation}
        d\bm{x}_t = f(\bm{x}_t, t)dt +g(\bm{x}_t,t)dW_t.
    \end{equation}
    
    Here, $g(\bm{x}_t,t)$ denotes the variance of the Brownian motion and
    represents the epistemic uncertainty for the dynamical system.  This variance
    is determined by which region the system is in. As shown in Fig.~\ref{fig:SDE},
    if the system is in the region with abundant training data and low epistemic
    uncertainty, the variance of the Brownian motion will be small; if the system
    is in the region with scarce training data and high epistemic uncertainty, the
    variance of the Brownian motion will be large.  We can thus obtain an epistemic
    uncertainty estimate from the variance of the final time solution $x_T$. 
    
    \begin{figure}[h]
      \centering
      \begin{subfigure}[t]{0.49\linewidth}
        \includegraphics[width=\linewidth]{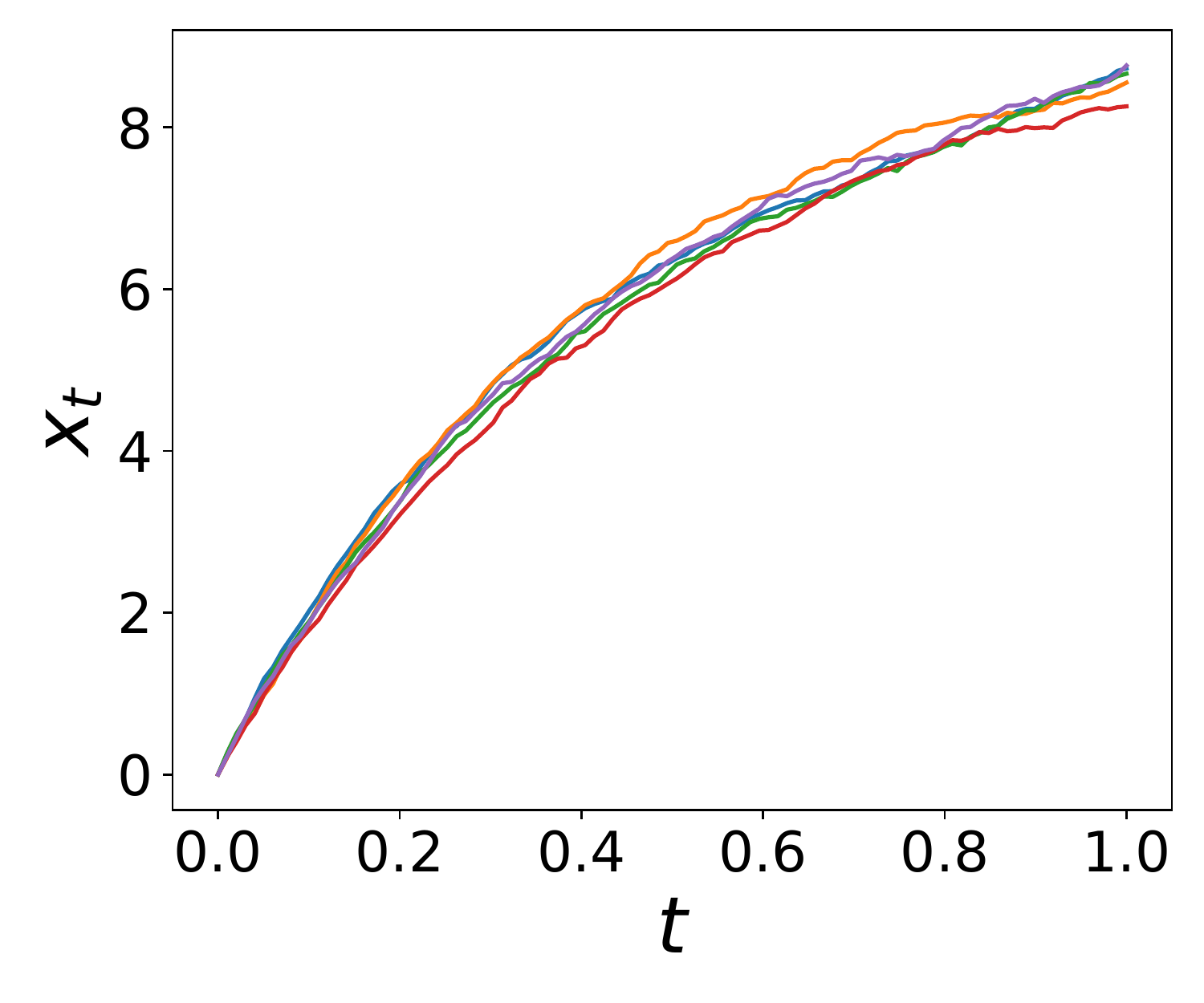}
        \caption{System in the region with low uncertainty}
      \end{subfigure}
      \begin{subfigure}[t]{0.49\linewidth}
        \includegraphics[width=\linewidth]{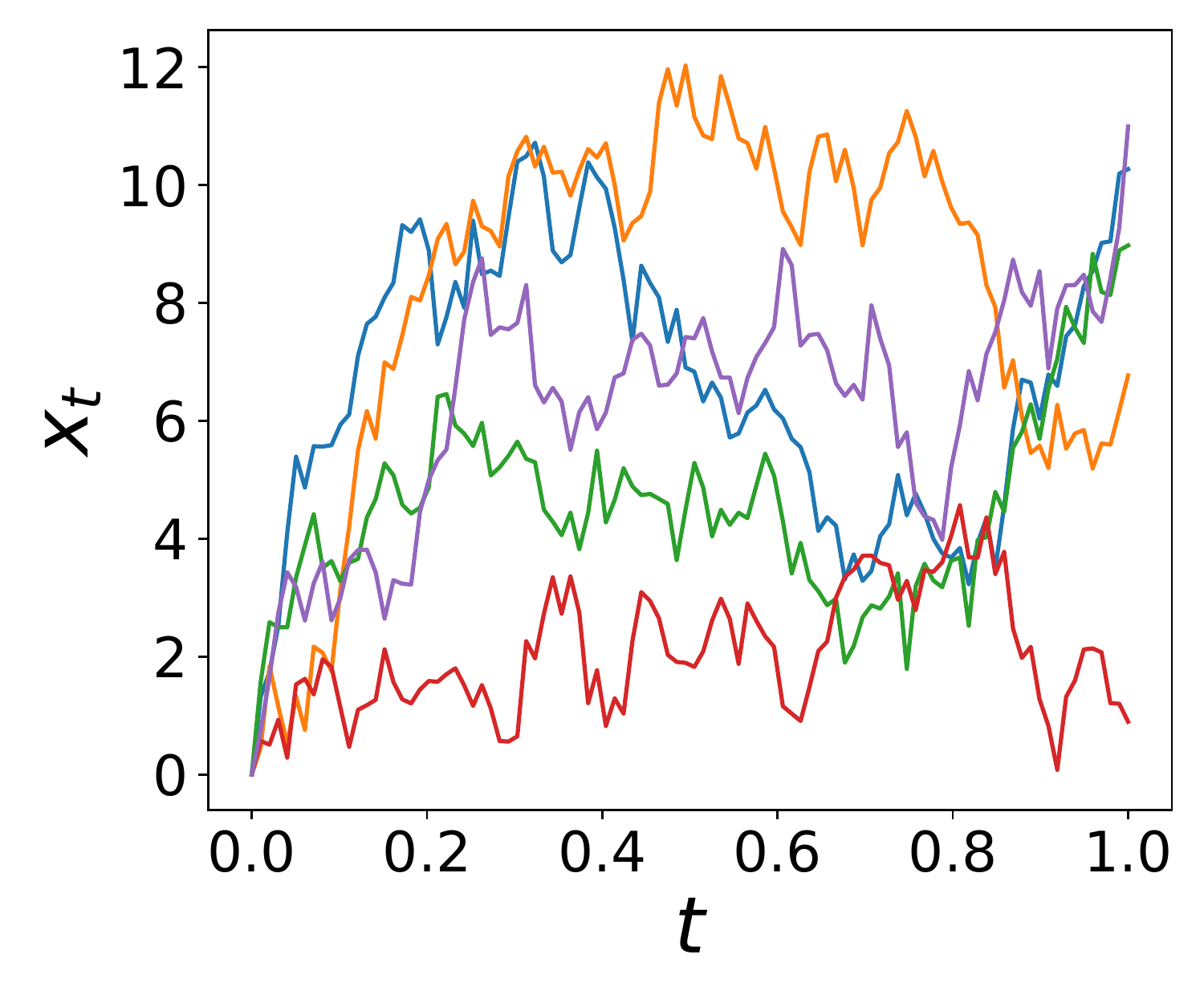}
        \caption{System in the region with high uncertainty}
      \end{subfigure}
    
      \caption{1-D trajectories of a linear SDE for five simulations. When the
      system is in the region with low uncertainty, \ie small $g(\bm{x}_t,t)$,
    the trajectories are more deterministic with small variance.  When the system
    is in the region with high uncertainty,  \ie large $g(\bm{x}_t,t)$, the
    trajectories are more scattered with large variance. }
      \label{fig:SDE}
    \end{figure}

    \subsection{SDE-Net for Uncertainty Estimation}
    
    \begin{figure*}[t]
      \centering
      \includegraphics[width=0.85\linewidth, height=0.35\linewidth]{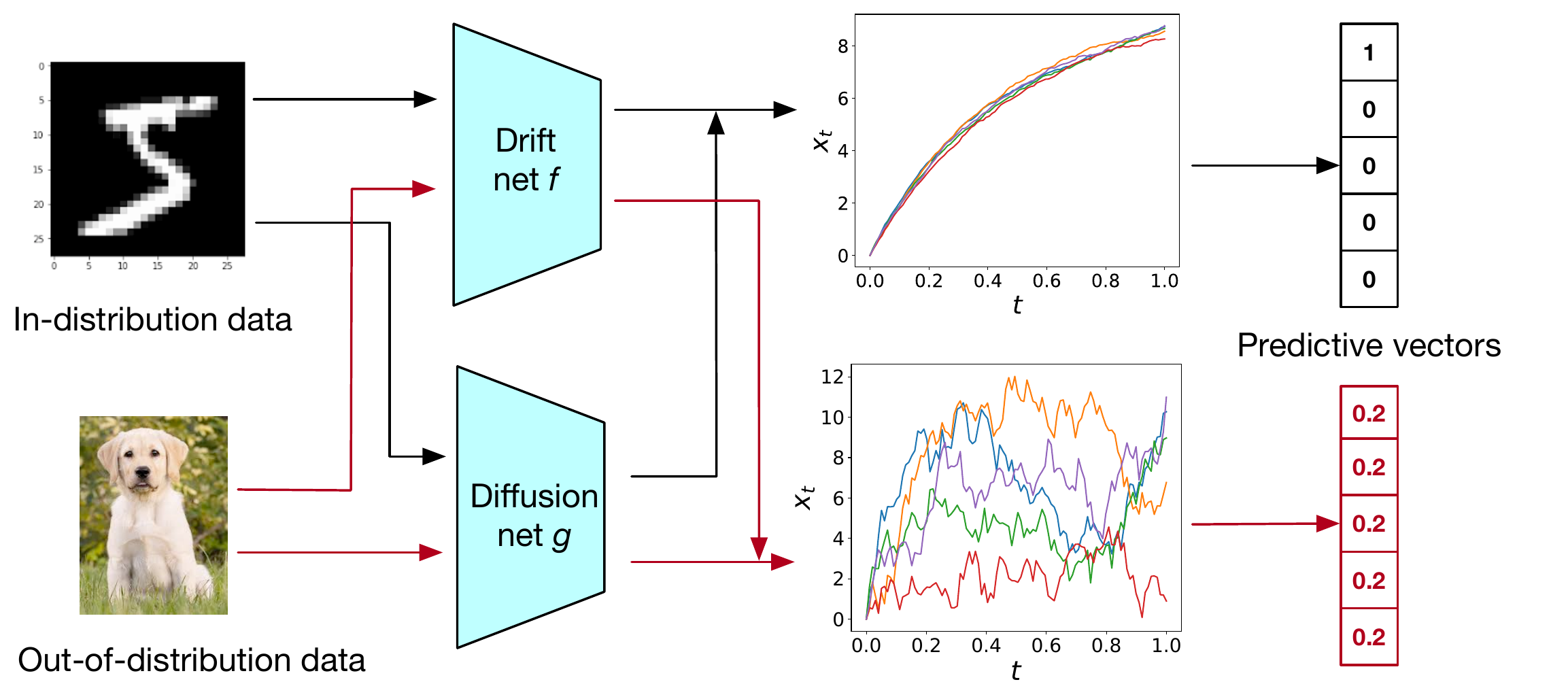}
      \caption{Components of the proposed SDE-Net. For in-distribution data, the system is dominated by the drift net $f$  and achieves good predictive accuracy; for out-of-distribution data, the system is dominated by the diffusion net $g$ and shows high diffusion.}
      \label{fig:illustration}
    \end{figure*}

    As discussed above, we can quantify epistemic uncertainty using Brownian motion.  To make the system able to achieve
    good predictive accuracy and meanwhile provide reliable uncertainty estimates,
    we design our SDE-Net model to use two separate neural nets to represent the
    drift and the diffusion of the system as in Fig.~\ref{fig:illustration}.
    
    The drift net $f$ in SDE-Net aims to control the system to achieve good
    predictive accuracy.  Another important role of the drift net $f$ is to
    capture aleatoric uncertainty. This is achieved by representing model output as
    a probabilistic distribution, \eg, categorical distribution for classification
    and Gaussian distribution for regression.
    
    The diffusion net $g$ in SDE-Net represents the diffusion of the system.  The
    diffusion of the system should satisfy the following: (1) For regions in the
    training distribution, the variance of the Brownian motion should be small (low
    diffusion). The system state is dominated by the drift term in this area and
    the output variance should be small; (2) For regions outside the training
    distribution, the variance of the Brownian motion should be large and the
    system is chaotic (high diffusion).  In this case, the variance of the outputs
    for multiple evaluations should be large.

    Based on the above desired properties, we propose the following objective function for training our SDE-Net model:
    \begin{align}
    &\min_{\bm{\theta}_f} {\rm E}_{\bm{x}_0\sim P_{\text{train}}}{\rm E}(L(\bm{x}_T))+\min_{\bm{\theta}_g}{\rm E}_{\bm{x}_0\sim P_{\text{train}}}g(\bm{x}_0;\bm{\theta}_g)\nonumber\\
    &+\max_{\bm{\theta}_g}{\rm E}_{\tilde{\bm{x}}_0\sim P_{\text{OOD}}}g(\tilde{\bm{x}}_0;\bm{\theta}_g) \nonumber\\
    & s.t. \quad d\bm{x}_t = \underbrace{f(\bm{x}_t,t;\bm{\theta}_f)}_{\text{drift neural net}}dt+\underbrace{g(\bm{x}_0;\bm{\theta}_g)}_{\text{diffusion neural net}}dW_t,
    \label{eq:loss}
    \end{align}
    where $L(\cdot)$ is the loss function dependent on the task, \eg cross entropy loss for classification, $T$ is the terminal time of the
    stochastic process, $P_{\text{train}}$ is the distribution for training data,
    and $P_{\text{OOD}}$ is the out-of-distribution (OOD) data.  To obtain OOD data, we choose to add additive Gaussian noise to obtain noisy
    inputs $\tilde{\bm{x}}_0 = \bm{x}_0+\bm{\epsilon}$ and then distribute the
    inputs according to the convolved distribution as in
    \citep{hafner2018reliable}.  An alternative is to use a different, real dataset
    as a set of samples from the OOD. However, this requires a careful choice of a
    real dataset to avoid overfitting \citep{lee2018training}.

    Unlike the traditional neural nets where each layer
    has its own parameters, the parameters in our proposed SDE-Net are shared by
    each layer.  This can decrease the number of parameters and leads to
    significant memory reduction.
    In the objective function, we also make a simplification that the variance of
    the diffusion term is only determined by the starting point $\bm{x}_0$ instead
    of the instantaneous value $\bm{x}_t$, which is usually sufficient and can make
    the optimization procedure easier.

    \textbf{Uncertainty Quantification}: Once an SDE-Net is learned, we can obtain
    multiple random realizations of the SDE-Net to get samples $\{
    \bm{x}_T\}_{m=1}^M$ and then compute the two uncertainties from them.  The
    aleatoric uncertainty is given by the expected predictive entropy
    $\mathbb{E}_{p(\bm{x}_T|\bm{x}_0,
    \bm{\theta}_{f,g})}[\mathcal{H}[p(y|\bm{x}_T)]]$ in classification and expected
    predictive variance $\mathbb{E}_{p(\bm{x}_T|\bm{x}_0,
    \bm{\theta}_{f,g})}[\sigma(\bm{x}_T)]$ in regression.  The epistemic
    uncertainty is given by the variance of the final solution ${\rm
    Var}(\bm{x}_T)$.  This sampling-and-computing operation shares similar spirit
    with the traditional ensembling method. However, a key
    difference exists between the two: ensembling methods require training multiple
    deterministic NNs, while our method just trains one neural SDE model and uses
    the Brownian motion to encode uncertainty, which incurs much lower time and
    memory costs.

    \subsection{Theoretical Analysis}
    In this subsection, we study the existence and uniqueness of the solution
    $\bm{x}_t (0\le t\le T)$ of the proposed stochastic system. Through this
    theoretical analysis, we can gain insights in designing a more effective
    network architecture for both the drift net $f$ and the diffusion net $g$.
    \begin{theorem}
    When there exists $C >0$ such that 
    \begin{equation}
        \begin{split}
        &||f(\bm{x},t;\bm{\theta}_f)-f(\bm{y},t;\bm{\theta}_f)||+||g(\bm{x};\bm{\theta}_g)-g(\bm{y};\bm{\theta}_g)|| \\
        &\le C||\bm{x}-\bm{y}||, \quad \forall \bm{x}, \bm{y} \in \mathcal{R}^n, t \ge 0.
        \end{split}
    \end{equation}
    Then, for every $\bm{x}_0 \in \mathcal{R}^n$, there exists a unique continuous and adapted process $(\bm{x}_{t}^{x_0})_{t\ge 0}$ such that for $t\ge 0$
    \begin{equation}
        \bm{x}_t^{\bm{x}_0} = \bm{x}_0 +\int_0^tf(\bm{x}_s^{\bm{x}_0},t;\bm{\theta}_f)ds + \int_0^t g(\bm{x}_0;\bm{\theta}_g) dW_s
    \end{equation}
    Moreover, for every $T\ge 0$, $\textup {E}(\textup{sup}_{1\le s \le T}|\bm{x}_s|^2)<+\infty.$
    \label{theorem:uniqueness}
    \end{theorem}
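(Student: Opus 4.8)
The plan is to follow the classical successive-approximation (Picard iteration) argument for It\^o SDEs, specialized to the structure of the system in the theorem, whose decisive simplification is that the diffusion coefficient $g(\bm{x}_0;\bm{\theta}_g)$ depends only on the fixed starting point $\bm{x}_0$ and not on the instantaneous state $\bm{x}_s$. As a result the stochastic integral collapses to $\int_0^t g(\bm{x}_0;\bm{\theta}_g)\,dW_s = g(\bm{x}_0;\bm{\theta}_g)\,W_t$, a fixed multiple of Brownian motion, so all the genuine contraction work lives in the drift term. Before starting I would record one consequence of the hypothesis: the Lipschitz bound yields a linear-growth estimate $\|f(\bm{x},t;\bm{\theta}_f)\|\le C_1(1+\|\bm{x}\|)$, since $\|f(\bm{x},t)\|\le\|f(\bm{x},t)-f(\bm{0},t)\|+\|f(\bm{0},t)\|$ and $f(\bm{0},\cdot;\bm{\theta}_f)$ is bounded on the compact interval $[0,T]$ by continuity; together with finiteness of $g(\bm{0};\bm{\theta}_g)$ this also supplies the square integrability needed at every step.

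First I would define the iterates by $\bm{x}^{(0)}_t\equiv\bm{x}_0$ and
\[
\bm{x}^{(n+1)}_t = \bm{x}_0 + \int_0^t f(\bm{x}^{(n)}_s,s;\bm{\theta}_f)\,ds + g(\bm{x}_0;\bm{\theta}_g)\,W_t,
\]
and verify by induction that each $\bm{x}^{(n)}$ is continuous, adapted, and square-integrable on $[0,T]$, using the linear-growth bound and the It\^o isometry for the single, state-independent diffusion term. The heart of the argument is then to control $\Delta_n(t):=\mathbb{E}\bigl[\sup_{s\le t}\|\bm{x}^{(n+1)}_s-\bm{x}^{(n)}_s\|^2\bigr]$. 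Because consecutive iterates share the identical diffusion term $g(\bm{x}_0;\bm{\theta}_g)W_t$, it cancels in the difference and only the drift survives; applying Cauchy--Schwarz to the time integral together with the Lipschitz bound gives a recursion $\Delta_n(t)\le L\int_0^t\Delta_{n-1}(s)\,ds$. Iterating yields $\Delta_n(t)\le (Lt)^n\Delta_0(t)/n!$, which is summable, so $(\bm{x}^{(n)})$ is Cauchy in the complete space of continuous adapted $L^2$ processes and converges to a limit $\bm{x}_t$; passing to the limit inside the integral equation (trivial for the diffusion term, controlled by the Lipschitz bound for the drift) shows $\bm{x}_t$ solves the SDE.

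The remaining steps, uniqueness and the moment bound, are both Gronwall arguments. For uniqueness, two solutions $\bm{x}_t,\tilde{\bm{x}}_t$ with the same initial condition again share the diffusion term, so $D(t):=\mathbb{E}[\sup_{s\le t}\|\bm{x}_s-\tilde{\bm{x}}_s\|^2]$ obeys $D(t)\le L\int_0^t D(s)\,ds$ with $D(0)=0$, forcing $D\equiv 0$ and hence indistinguishability. For the final claim the linear-growth bound produces $\mathbb{E}[\sup_{s\le t}\|\bm{x}_s\|^2]\le K_1(1+\|\bm{x}_0\|^2)+K_2\int_0^t\mathbb{E}[\sup_{r\le s}\|\bm{x}_r\|^2]\,ds$, and Gronwall delivers the uniform-in-$T$ finiteness. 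I expect the one ordinarily delicate point, the maximal estimate for the supremum of the stochastic integral (normally demanding Doob's inequality and the Burkholder--Davis--Gundy inequality), to be essentially free here: the state-independence of $g$ removes that term from every difference, so the proof reduces to a pathwise random-ODE contraction closed by Gronwall's inequality.
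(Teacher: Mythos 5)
Your proposal follows essentially the same route as the paper's proof: Picard iteration with the key observation that the diffusion term $g(\bm{x}_0;\bm{\theta}_g)W_t$ depends only on the initial point and therefore cancels in every difference, reducing both existence and uniqueness to a drift-only Gronwall contraction (the paper packages the Gronwall steps as two small lemmas and argues pathwise with $|\cdot|$ rather than in $L^2$ with suprema, but the mechanism is identical). If anything yours is more complete, since you also supply the linear-growth estimate and the Gronwall argument for the moment bound $\mathrm{E}(\sup_{s\le T}|\bm{x}_s|^2)<\infty$, which the paper asserts in the theorem but never actually proves.
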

    
    The proof of Theorem \ref{theorem:uniqueness} can be found in the supplementary material.

    \textbf{Remark}. According to Theorem \ref{theorem:uniqueness},
    $f(\bm{x},t;\bm{\theta}_f)$ and $g(\bm{x};\bm{\theta}_g)$ must both be
    uniformly Lipschitz continuous.  This can be satisfied by using Lipshitz
    nonlinear activations in the network architectures, such as ReLU, sigmoid and
    Tanh \citep{2018:Anil}.  However, if we na\"{\i}vely optimize the loss function
    in Equation \eqref{eq:loss}, $g(\bm{x}_0;\bm{\theta}_g)$ can be infinitely
    large for the input from out-of-distribution.  This will lead to explosive
    solution and make the optimization procedure unstable.  To solve this problem,
    we define the maximum value of the output of $g(\bm{x};\bm{\theta}_g)$ as a
    hyper-parameter $\sigma_{\rm max}$. Then, the output of the diffusion neural
    net is given by a sigmoid function times $\sigma_{\rm max}$.
    
    \subsection{SDE-Net Training}
    
    There is no closed form solution to the true final random variable $\bm{x}_T$.
    In principle, we can simulate the stochastic dynamics using any high-order
    numerical solver with adaptive step size \citep{platen1999introduction}.
    However, high-order numerical methods can be costly in the context of deep
    learning where the input can have thousands of dimensions.  Since we focus on
    supervised learning and uncertainty quantification, we choose to use the simple
    Euler-Maruyama scheme with fixed step size \citep{Kloeden} for efficient
    network training. Under such a scheme, the time interval $[0,T]$ is divided into
    $N$ subintervals. Then, we can simulate the SDE by: 
    \begin{equation}
         \bm{x}_{k+1} = \bm{x}_k + f(\bm{x}_k, t; \bm{\theta}_f)\Delta t + g(\bm{x}_0;\bm{\theta}_g)\sqrt{\Delta t}Z_k
           \label{eq:discretize}
    \end{equation}
    where $Z_k \sim \mathcal{N}(0,1)$ is the standard Gaussian random variable and
    $\Delta t = T/N$. We will show that empirically it suffices to sample only one
    path for each data point during training time.  The number of steps for solving
    the SDE can be considered equivalently as the number of layers in the
    definition of traditional neural nets.  Then, the training of SDE-Net is
    actually the forward and backward propagations as in standard neural nets,
    which can be easily implemented with libraries such as Tensorflow and Pytorch.
    The drift neural net $f$ and the diffusion neural net $g$ are optimized
    alternately, as shown in Algorithm \ref{alg:training}.

    \begin{algorithm}[t]
        \caption{Training of SDE-Net. $h_1$ is the downsampling layer; $h_2$ is the fully connected layer; $f$ and $g$ are the drift net and diffusion net; $L$ is the loss function.}
        \begin{algorithmic}
        \STATE Initialize $h_1, f, g$ and $h_2$
            \FOR{$\#$ training iterations}
            \STATE Sample minibatch of $N_M$ data from in-distribution: $\mathbf{X}^{N_M} \sim p_{\text {train}}(x)$
    
            \STATE Forward through the downsampling layer: $\mathbf{X}_0^{N_M} = h_1(\mathbf{X}^{N_M})$
    
            \STATE Forward through the SDE-Net block:
                \FOR{$k=0$ to $N-1$}
                \STATE Sample $\mathbf{Z}_k^{N_M} \sim \mathcal{N}(0,\mathbf{I})$
                \STATE $\mathbf{X}_{k+1}^{N_M} = \mathbf{X}_k^{N_M} + f(\mathbf{X}_k^{N_M}, t)\Delta t + g(\mathbf{X}_0^{N_M})\sqrt{\Delta t}\mathbf{Z}_k$
                \ENDFOR 
            \STATE Forward through the fully connected layer: $\mathbf{X}_{\text{f}}^{N_M} = h_2(\mathbf{X}_k^{N_M})$
            \STATE Update $h_1, h_2$ and $f$ by $\nabla_{h_1,h_2,f}\frac{1}{N_M}L(\mathbf{X}_{\text{f}}^{N_M})$
            \STATE Sample minibatch of $N_M$ data from out-of-distribution: $\mathbf{X}^{N_M} \sim p_{\text {OOD}}(x)$
            \STATE Forward through the downsampling layer: $\mathbf{X}_0^{N_M}, \tilde{\mathbf{X}}_0^{N_M} = h_1(\mathbf{X}^{N_M}), h_1(\tilde{\mathbf{X}}^{N_M})$
            \STATE Update $g$ by $\nabla_g g(\mathbf{X}_0^{N_M})-\nabla_g g(\tilde{\mathbf{X}_0}^{N_M})$
            \ENDFOR
        \end{algorithmic}
        \label{alg:training}
    \end{algorithm}

    \section{Experiments} 
    \label{sect:exp}
    
    In this section, we study how the estimated uncertainty can improve model
    robustness and label efficiency. We first study three tasks on model
    robustness: (1) out-of-distribution detection, (2) misclassification detection,
    and (3) adversarial sample detection. We then study how the estimated
    uncertainties can improve label efficiency on active learning.
    
    \subsection{Experimental Setup} 
    
    We compare our SDE-Net model with the following methods: (1) Threshold
    \citep{Hendrycks:2016}, which is used in the deterministic DNNs (2) MC-dropout
    \citep{Gal2015}, (3) DeepEnsemble\footnote{We use five neural nets in the
    ensemble.}\citep{Lakshminarayanan2016}, (4) Prior network (PN)
    \citep{Malinin2018}, (5) Bayes by Backpropagation (BBP) \citep{Blundell:2015},
    (6) preconditioned Stochastic gradient Langevin dynamics (p-SGLD)
    \citep{Li:2016:PSG:3016100.3016149}.  
    
    The network architecture of the compared methods is a residual net
    \citep{Chen2018}. For our method, we use one SDE-Net block in place of
    residual blocks and set the number of subintervals as  the number of residual
    blocks in ResNet for fair comparison---the number of hidden layers in SDE-Net
    is the same as the baseline models under such settings. For our SDE-Net, we
    sample one path during training and perform 10 stochastic forward passes at
    test time in all experiments. 
    
    As PN and SDE-Net both involve OOD samples during the training process, we
    purturb the training data with Gaussian noise (zero mean and variance four for
    both MNIST and SVHN) as pseudo OOD data.  Our supplementary materials provide
    more details about the implementation, setup, and additional experimental
    results.

    \begin{table*}[t]
        \scriptsize
        \caption{Classification and out-of-distribution detection results on MNIST and SVHN. All values are in percentage, and larger values indicates better detection performance. We report the average performance and standard deviation for 5 random initializations. }
          \label{table:OOD}
          \centering
          \begin{tabular}{cccccccccc}
            \toprule
        ID &  OOD & Model &\# Parameters&\makecell{Classification \\accuracy}& \makecell{TNR \\at TPR $95\%$} & AUROC& \makecell{Detection \\accuracy} & \makecell{AUPR \\in} & \makecell{AUPR\\ out}\\
            \midrule
           \multirow{4}{*}{MNIST}& \multirow{4}{*}{SEMEION}
           &Threshold& 0.58M & $99.5 \pm0.0$ &$94.0\pm 1.4$ & $98.3\pm 0.3$ &$94.8\pm 0.7 $ & $99.7\pm 0.1$ & $89.4 \pm 1.1$\\
           &&DeepEnsemble& 0.58M $\times$ 5 &$\bm{99.6}\pm$ NA &$96.0\pm$ NA &$98.8\pm$ NA&$95.8\pm$ NA & $99.8 \pm $ NA& $91.3\pm$ NA   \\
            &&MC-dropout & 0.58M& $99.5 \pm 0.0$ & $92.9\pm 1.6$ & $97.6\pm0.5$ & $94.2\pm0.7$ & $99.6\pm0.1$ & $88.5\pm1.7$ \\
             &&PN&0.58M & $99.3 \pm 0.1$ & $93.4 \pm 2.2$  & $96.1\pm 1.2$ & $94.5\pm 1.1$ & $98.4\pm 0.7$  & $88.5 \pm 1.3$ \\
                 &&BBP& 1.02M& $99.2 \pm 0.3$& $75.0\pm 3.4$ & $94.8\pm 1.2$ & $90.4\pm 2.2$ & $99.2\pm 0.3$ & $76.0 \pm 4.2$ \\
             &&p-SGLD&0.58M&  $99.3 \pm 0.2$ & $85.3 \pm 2.3$ & $89.1 \pm 1.6$  & $90.5\pm 1.3$ & $93.6\pm 1.0$ & $82.8\pm 2.2$ \\
             &&SDE-Net  &0.28M& $99.4\pm 0.1$& $\bm{99.6}\pm 0.2$ & $\bm{99.9}\pm 0.1$ & $\bm{98.6}\pm 0.5$ & $\bm{100.0}\pm 0$ & $\bm{99.5}\pm 0.3$ \\
            \hline
            \multirow{4}{*}{MNIST}&\multirow{4}{*}{SVHN} 
            &Threshold& 0.58M&$99.5 \pm 0.0$ & $90.1 \pm 2.3$ &$96.8 \pm 0.9$&$92.9 \pm 1.1$ & $90.0 \pm 3.5$ &  $98.7 \pm 0.3$\\
            &&DeepEnsemble&0.58M$\times 5$ & $\bm{99.6}\pm $ NA &$92.7\pm$ NA& $98.0 \pm$ NA & $94.1\pm$ NA & $94.5\pm$ NA& $99.1\pm$ NA   \\
                &&MC-dropout&0.58M &$99.5 \pm 0.0$& $88.7\pm 0.6$ & $95.9\pm 0.4$ & $92.0\pm 0.3$ & $87.6\pm 2.0$ & $98.4\pm 0.1$ \\
                &&PN& 0.58 M& $99.3 \pm 0.1$ & $90.4\pm 2.8$ & $94.1 \pm 2.2$ & $93.0\pm 1.4$ & $73.2\pm 7.3$ & $98.0\pm 0.6$ \\
                &&BBP&1.02M & $99.2 \pm 0.3$ & $80.5\pm 3.2$ & $96.0\pm 1.1$ & $91.9\pm 0.9$& $92.6\pm 2.4$ & $98.3 \pm 0.4$ \\
                &&p-SGLD&0.58M & $99.3 \pm 0.2$& $94.5\pm 2.1$ & $95.7\pm 1.3$ & $95.0\pm 1.2$  & $75.6\pm 5.2$ & $98.7 \pm 0.2$ \\
              &&SDE-Net  & 0.28M&$99.4 \pm 0.1$& $\bm{97.8}\pm 1.1$ & $\bm{99.5}\pm 0.2$ & $\bm{97.0}\pm 0.2$ & $\bm{98.6}\pm 0.6$ & $\bm{99.8}\pm 0.1$ \\
              \hline
                \multirow{4}{*}{SVHN}&  \multirow{4}{*}{CIFAR10}
                &Threshold& 0.58M& $95.2 \pm 0.1$& $66.1 \pm 1.9$ & $94.4 \pm 0.4$&$89.8 \pm 0.5$&$96.7 \pm 0.2$ &$84.6 \pm 0.8$ \\
                &&DeepEnsemble& 0.58M$\times$5 & $\bm{95.4}\pm$ NA& $66.5\pm$ NA &$94.6 \pm$ NA & $90.1 \pm$  NA& $97.8 \pm$ NA& $84.8 \pm$ NA    \\
           &&MC-dropout & 0.58M& $95.2 \pm 0.1$& $66.9 \pm 0.6$ &$94.3 \pm 0.1$ & $89.8 \pm 0.2$ & $97.6 \pm 0.1$ & $84.8 \pm 0.2$ \\
              &&PN& 0.58M& $95.0 \pm 0.1$& $66.9\pm 2.0$ & $89.9\pm 0.6$ & $87.4 \pm 0.6$ & $92.5 \pm 0.6$ & $82.3 \pm 0.9$ \\
             &&BBP & 1.02M & $93.3\pm 0.6$ & $42.2\pm 1.2$ & $90.4 \pm 0.3$ & $83.9 \pm 0.4$ & $96.4 \pm 0.2$ & $73.9\pm 0.5$ \\
           &&p-SGLD& 0.58M& $94.1 \pm 0.5$& $63.5\pm 0.9$ & $94.3 \pm 0.4$ & $87.8 \pm 1.2$ & $97.9\pm 0.2$ & $83.9\pm 0.7$ \\
            &&SDE-Net  &0.32 M &$94.2\pm 0.2$ & $\bm{87.5}\pm 2.8$ & $\bm{97.8}\pm 0.4$ & $\bm{92.7}\pm 0.7$ & $\bm{99.2}\pm 0.2$ & $\bm{93.7}\pm 0.9$ \\
            \hline
            \multirow{4}{*}{SVHN}&\multirow{4}{*}{CIFAR100} &Threshold&0.58M & $95.2 \pm 0.1$&$64.6 \pm 1.9$ &$93.8 \pm 0.4$&$88.3 \pm 0.4$& $97.0 \pm 0.2$ & $83.7 \pm 0.8$  \\
            &&DeepEnsemble&0.58M $\times 5$&$\bm{95.4} \pm$ NA &$64.4\pm $ NA & $93.9 \pm $ NA &  $89.4 \pm $ NA& $97.4 \pm$ NA & $84.8 \pm$ NA \\
                &&MC-dropout &0.58M & $95.2\pm 0.1$& $65.5 \pm 1.1$ & $93.7 \pm 0.2$ & $89.3 \pm 0.3$ & $97.1 \pm 0.2$& $83.9 \pm 0.4$  \\
                &&PN&0.58M &$95.0\pm 0.1$ & $65.8\pm 1.7$ & $89.1\pm 0.8$ & $86.6 \pm 0.7$ & $91.8 \pm 0.8$ & $81.6 \pm 1.1$\\
                &&BBP &1.02M & $93.3 \pm 0.6$& $42.4 \pm 0.3$ & $90.6\pm 0.2$ & $84.3\pm 0.3$ & $96.5\pm 0.1$ & $75.2\pm 0.9$ \\
                  &&p-SGLD& 0.58M&$94.1\pm 0.5$ & $62.0\pm 0.5$ & $91.3\pm 1.2$ & $86.0\pm 0.2$ & $93.1\pm 0.8$ & $81.9\pm 1.3$ \\
              &&SDE-Net  & 0.32M&$94.2 \pm 0.2 $& $\bm{83.4}\pm 3.6$& $\bm{97.0}\pm 0.4$ & $\bm{91.6}\pm 0.7$& $\bm{98.8}\pm 0.1$& $\bm{92.3}\pm 1.1$ \\
            \bottomrule
          \end{tabular}
    \end{table*}

    \subsection{Out-of-Distribution Detection}
    \label{sec:OOD}
    
    Our first task is out-of-distribution (OOD) detection, which aims to use
    uncertainty to help the model recognize out-of-distribution samples at test
    time.  In open-world settings, the model needs to deal with continuous data
    that may come from different data distributions or unseen classes. For OOD
    samples, it is wiser to let the model say `I don't know' instead of making an
    absurdly wrong predictions. We investigate the OOD detection task under both
    classification and regression settings. Following previous work
    \citep{Hendrycks:2016}, we use four metrics for the OOD detection task: (1)
    True negative rate (TNR) at $95\%$ true positive rate (TPR); (2) Area under the
    receiver operating characteristic curve (AUROC); (3) Area under the
    precision-recall curve (AUPR); and (4) Detection accuracy. Larger values of
    them indicate better detection performance.

    \textbf{OOD detection for classification.} We first evaluate the performance of
    different models for OOD detection in classification tasks. For fair
    comparison, all the methods use the probability of the final predicted class
    for detection. Table \ref{table:OOD} shows the OOD detection performance as well as the
     classification accuracy on two image classification datasets: MNIST and SVHN.
     We mix different test OOD datasets with the target dataset (MNIST or SVHN) and
     evaluate the performance of different models in OOD detection.  As shown,
     SDE-Net consistently achieves the best OOD detection performance among all the
     models under different combinations.  DeepEnsemble is
     the strongest among the baselines but it still underperforms SDE-Net
     consistently.  Furthermore, DeepEnsemble needs to train multiple DNNs and
     incurs much larger computational costs. 
     While PN and SDE-Net both use pseudo OOD data (with Gaussian
     noise) during training, SDE-Net consistently outperforms PN in all the
     settings.  In addition to using the Gaussian-purturbed OOD data, we also
     compared the performance of SDE-Net and PN when using real-life OOD datasets
     during training (see supplementary material).  We find that PN is easy to be
     overfitted, while our SDE-Net is more robust to the choice of OOD data used
     for training.

    \begin{figure}[h]
      \centering
      \includegraphics[width=0.95\linewidth]{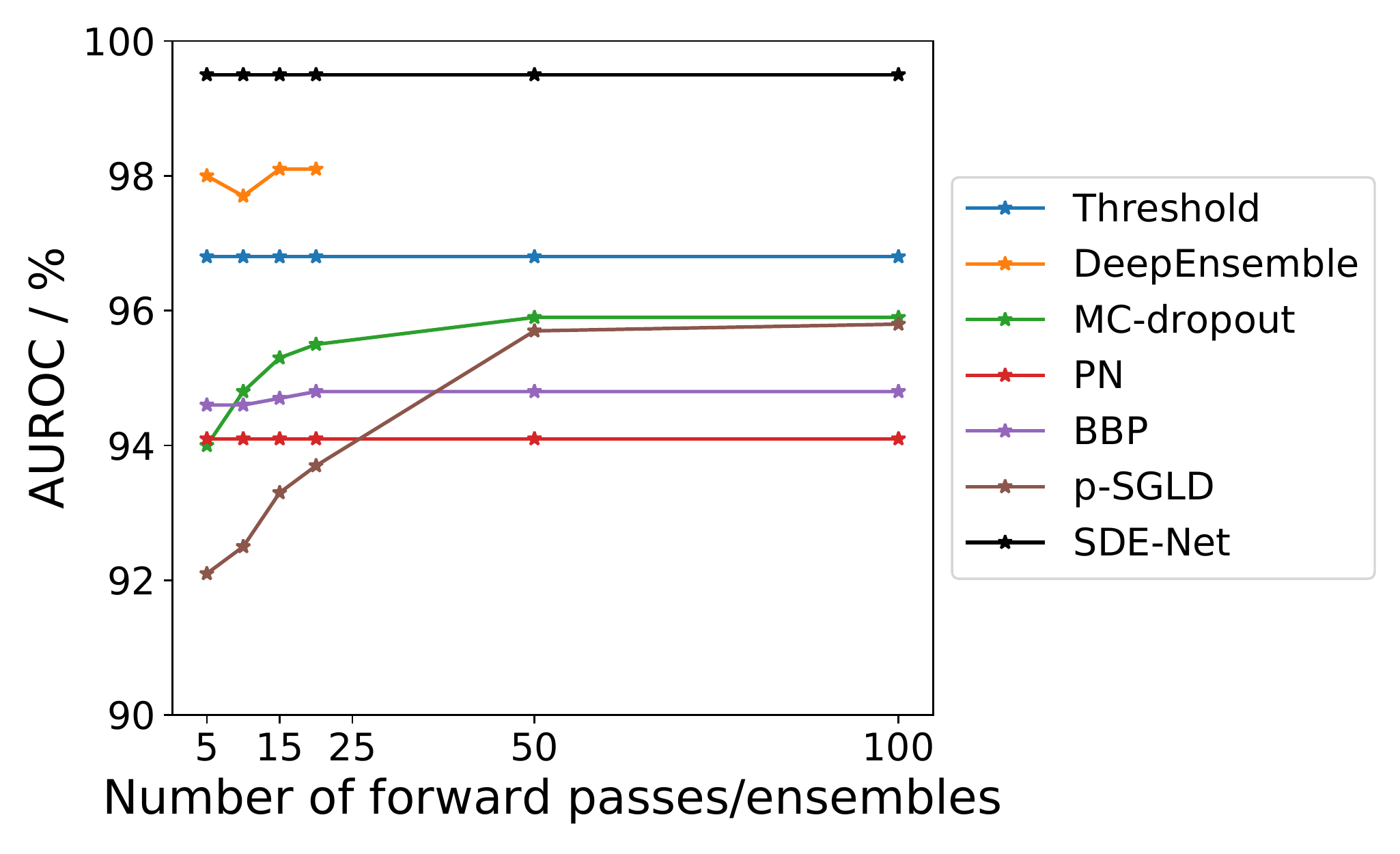}
      \caption{Effect of number of forward passes / ensembles on out-of-distribution (OOD) detection. We use MNIST as the ID data and SVHN as the OOD data.}
      \label{fig:OOD}
    \end{figure}
    
    Fig.~\ref{fig:OOD} shows the impact of the number of forward passes or ensembles on OOD detection, using MNIST as the ID data and SVHN as the OOD data.
    As we can see, the BNNs (MC-dropout, p-SGLD and BBP) require more samples than SDE-Net to reach their peak performance at test time. For DeepEnsemble, its performance is already almost saturated when using five nets and larger ensemble sizes can bring little performance gain.

    In addition to the OOD detection metrics, we also studied the classification accuracy of different models. We find that the predictive
    performance of SDE-Net is very close to state-of-the-art results even with
    significantly fewer parameters. One can further achieve better results by
    stacking multiple SDE-Net blocks together.

    \textbf{OOD detection for regression.} We now investigate OOD detection in
    regression tasks. Different from classification, few works have studied the OOD
    detection task for regression. We use the Year Prediction MSD dataset
    \citep{Dua:2019} as training data and the Boston Housing dataset \citep{Boston}
    as test OOD data. Threshold and PN are excluded here since they only apply to classification
    tasks. To detect OOD samples for regression tasks, all the methods rely on the
    variance of the predictive mean.  Table \ref{table:OOD-regression} shows the
    OOD detection performance for different methods. The results of other metrics are put into the supplementary material due to the space limit.
    Because of the imbalance of the test ID and OOD data, AUPR out is a better metric than AUPR in. 
    OOD detection for regression is more difficult than for classification, because
    regression is a continuous and unbounded problem which makes uncertainty
    estimation difficult. For this challenging task, all the baselines
    perform quite poorly, yet SDE-Net still achieves strong performance.  The
    reason is that the diffusion net in SDE-Net directly models the relationship
    between the input data and epistemic uncertainty, which encourages SDE-Net to
    output large uncertainty for OOD data and low uncertainty for ID data even for
    this challenging task.
    
    \begin{table}[h]
      \scriptsize
      \caption{Out-of-distribution detection for regression on Year Prediction MSD + Boston Housing. We report the average performance and standard deviation for 5 random initializations.}
          \label{table:OOD-regression}
          \centering
          \begin{tabular}{ccccc}
          \toprule
          Model & \# Parameters &RMSE & AUROC&  \makecell{AUPR\\ out}\\
          \midrule
          DeepEnsemble&14.9K$\times 5$ &$\bm{8.6} \pm$ NA& $59.8\pm$ NA& $1.3\pm$ NA    \\
          MC-dropout &14.9K & $8.7 \pm 0.0$  & $53.0\pm 1.2$  & $1.1\pm 0.1$ \\
          BBP &30.0K &$9.5 \pm 0.2$  & $56.8\pm 0.9$  &$1.3\pm 0.1$  \\
          p-SGLD &14.9K & $9.3 \pm 0.1$& $52.3\pm 0.7$ & $1.1\pm 0.2$ \\
          SDE-Net & 12.4K& $8.7 \pm 0.1$& $\bm{84.4}\pm 1.0$  & $\bm{21.3}\pm 4.1$ \\
          \bottomrule
          \end{tabular}
      \end{table}

    \subsection{Misclassification Detection} Besides OOD data detection, another
    important use of uncertainty is to make the model aware when it may make
    mistakes at test time. Thus, our second task is misclassification detection \citep{Hendrycks:2016},
    which aims at leveraging the predictive uncertainty to identify test samples on
    which the model have misclassified. Table \ref{table:mis} shows the
    misclassification detection results for different models on MNIST and SVHN.
    p-SGLD achieves the best overall performance for this task. 
    SDE-Net achieves comparable performance with DeepEnsemble and outperforms other baselines. 
    However, p-SGLD needs to store the copies of the
    parameters for evaluation, which can be prohibitively costly for large NNs.
    DeepEnsemble requires training multiple models and incurs high computational
    cost. Therefore, we argue that SDE-Net is a better choice for the
    misclassification task in practice.
    
    \begin{table}[t]
      \scriptsize
      \caption{Misclassification detection performance on MNIST and SVHN. We report the average performance and standard deviation for 5 random initializations.}
        \label{table:mis}
        \centering
        \begin{tabular}{cccccc}
          \toprule
       Data & Model & AUROC & \makecell{AUPR \\succ} & \makecell{AUPR\\ err}\\
          \midrule
       \multirow{4}{*}{MNIST}
       &Threshold  & $94.3 \pm 0.9$ &$ 99.8 \pm 0.1$ &$31.9 \pm 8.3$ & \\
       &  DeepEnsemble&$\bm{97.5} \pm$ NA&$\bm{100.0}\pm$ NA&$41.4\pm$ NA    \\
              &MC-dropout&$95.8 \pm 1.3$&$99.9 \pm 0.0$&$33.0\pm 6.7$  \\
          &PN & $91.8\pm 0.7 $& $99.8\pm0.0$ & $33.4 \pm 4.6$\\
              &BBP  & $96.5 \pm 2.1$  & $\bm{100.0}\pm 0.0$ & $35.4\pm 3.2$\\
          &P-SGLD  & $96.4\pm1.7$ & $\bm{100.0\pm}0.0$ & $\bm{42.0\pm}2.4$\\
          &SDE-Net&$96.8\pm 0.9$ &$\bm{100.0 \pm} 0.0$&$36.6 \pm 4.6$   \\
          \hline
           \multirow{4}{*}{SVHN} 
           &Threshold  &$90.1\pm 0.3$ &$99.3\pm 0.0$ & $42.8\pm 0.6$& \\
           &  DeepEnsemble&$91.0\pm$ NA &$\bm{99.4}\pm$ NA &$46.5\pm$ NA    \\
              &MC-dropout&$90.4\pm 0.6$ &$99.3\pm 0.0$ &$45.0\pm 1.2$  \\
          &PN & $84.0\pm 0.4$ & $98.2\pm0.2$ & $43.9\pm 1.1$\\
          &BBP  & $91.8 \pm 0.2$  & $99.1\pm 0.1$ & $50.7\pm 0.9 $\\
          &P-SGLD  & $\bm{93.0}\pm 0.4$ & $\bm{99.4}\pm 0.1$ & $48.6 \pm 1.8$\\
          &SDE-Net&$92.3\pm 0.5$ &$\bm{99.4\pm}0.0$&$\bm{53.9}\pm 2.5$  \\
          \bottomrule
        \end{tabular}
    \end{table}
    
    \subsection{Adversarial Sample Detection}

    Our third task studies adversarial sample detection.  Existing works \citep{Intri,Good:2014} have shown that DNNs
    are extremely vulnerable to adversarial examples crafted by adding small
    adversarial perturbations. The ability to detect such adversarial samples is
    important for AI safety.  Different from existing literature on adversarial
    training, we do not use adversarial training but only examine the
    uncertainty-aware models' ability in detecting adversarial samples.  We study
    two attacks: Fast Gradient-Sign Method (FGSM) \citep{FGSM}, and Projected
    Gradient Descent (PGD) \citep{madry2018towards}.

    Fig.~\ref{fig:adv:FGSM} shows the detection performance of different models when facing
    FGSM attacks.  As shown, when the perturbation size $\epsilon$ varies, SDE-Net
    can achieve similar AUROC with p-SGLD and outperforms all other methods.  On
    the simpler MNIST dataset, all methods can achieve $\sim$100\% AUROC when the
    perturbation size is large. However, on the more challenging SVHN dataset, only
    SDE-Net still converges to 100\% AUROC, while other baselines achieve only
    about 90\% AUROC even with perturbation size of one.

    Fig.~\ref{fig:adv:PGD} shows the detection performance of different models when
    facing PGD attacks.  We use the default parameters in \citep{madry2018towards}
    and plot the AUROC curve versus the number of PGD iterations.  Under the
    stronger PGD attacks, the AUROCs of all the baselines on MNIST drop below 70\%
    after 60 iterations, while SDE-Net can still achieves over 80\% AUROC after 100
    iterations.  On SVHN, we observe a different picture where all the methods
    quickly become overconfident except for the costly DeepEnsemble method.  This
    is likely due to higher dimensionality of the data manifold in SVHN.  Further
    work is needed to design efficient and robust uncertainty-aware models that can
    detect high-dimensional adversarial samples generated by such strong attackers.
    
    \begin{figure}[t!]
      \centering
      \begin{subfigure}[b]{0.48\linewidth}
        \includegraphics[width=\linewidth]{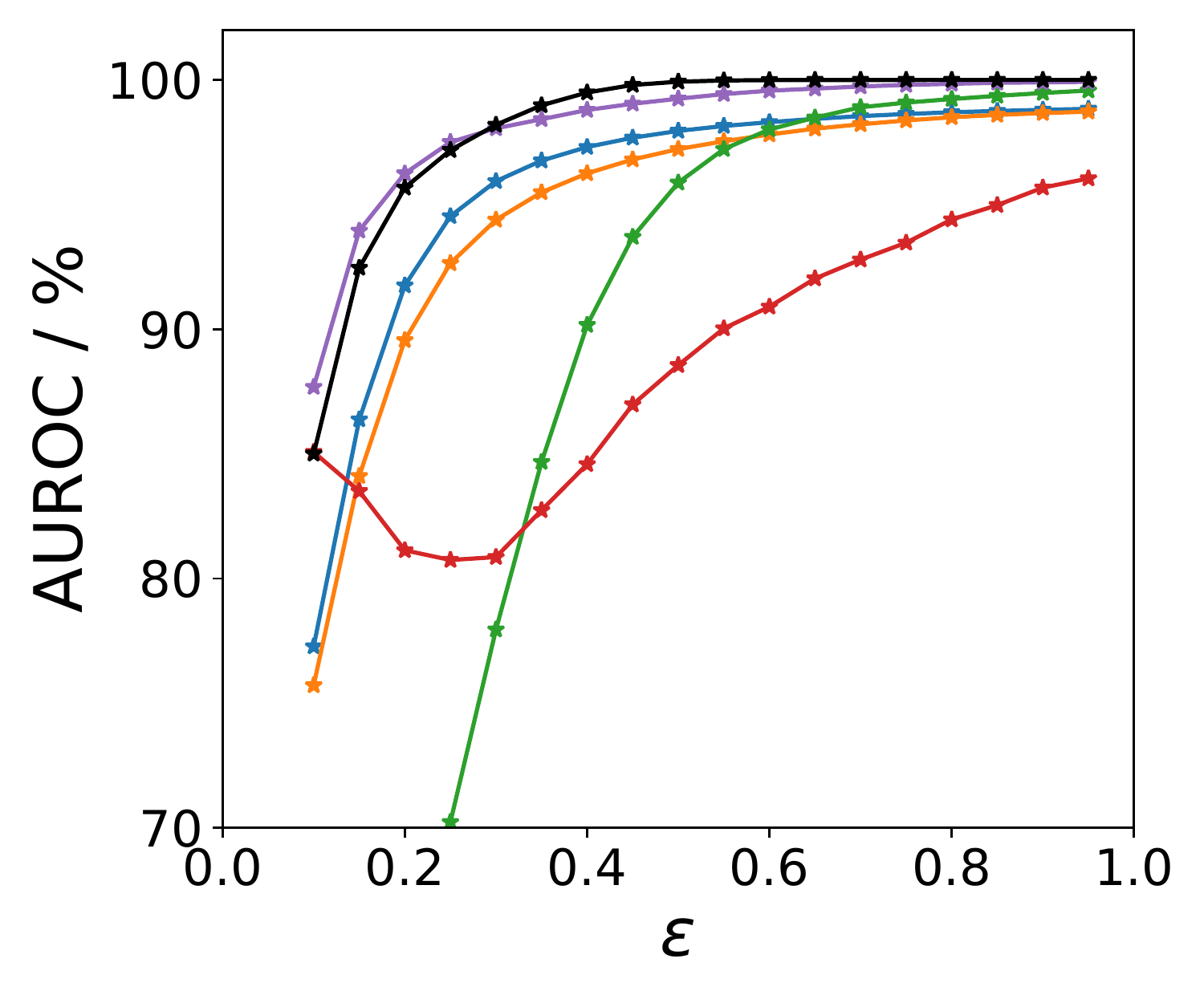}
        \caption{MNIST}
      \end{subfigure}
      \begin{subfigure}[b]{0.48\linewidth}
        \includegraphics[width=\linewidth]{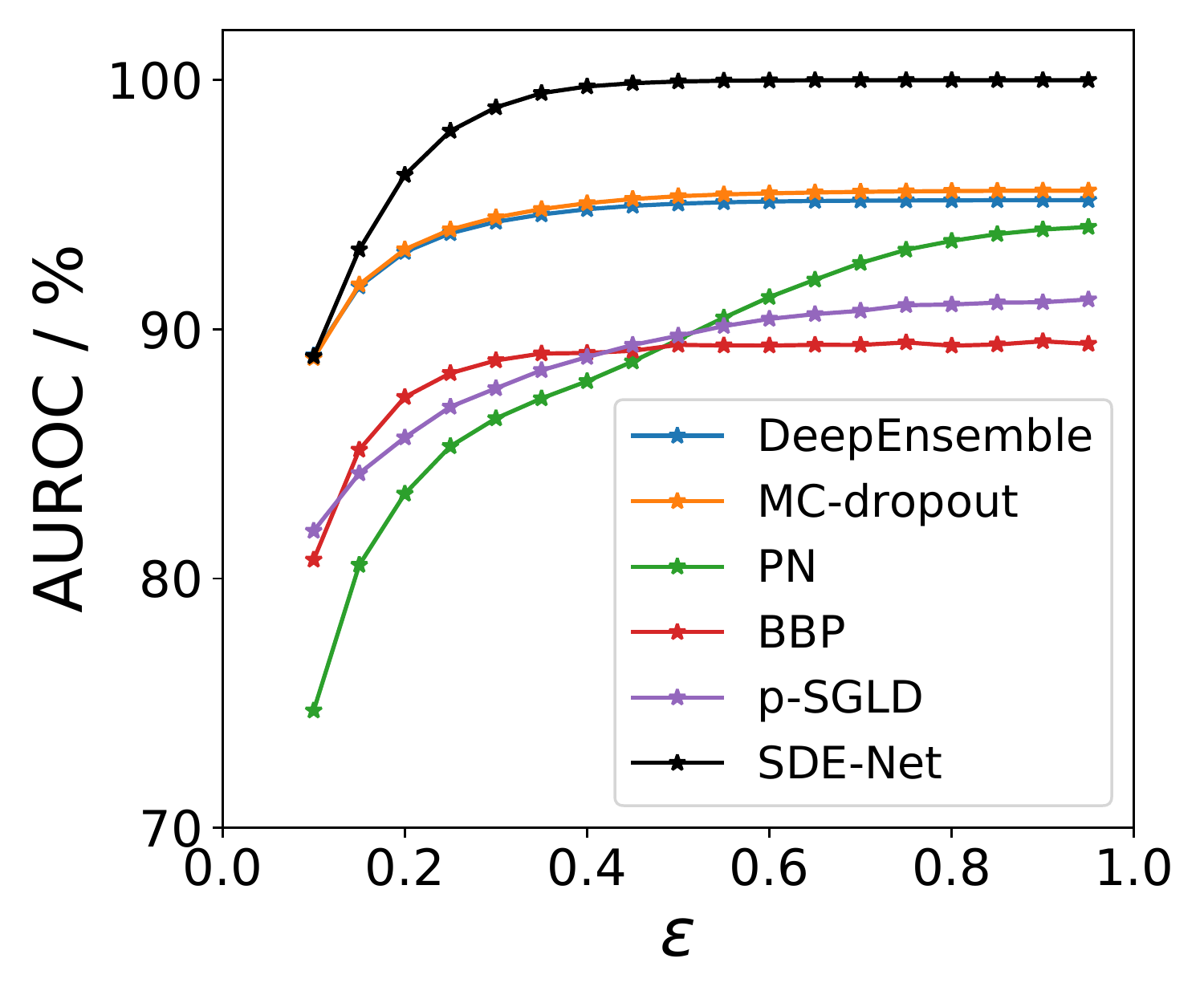}
        \caption{SVHN}
      \end{subfigure}
      \caption{The performance of adversarial sample detection under FGSM attacks. $\epsilon$ is the step size in FGSM.}
      \label{fig:adv:FGSM}
    \end{figure}
    
    \begin{figure}[t!]
      \centering
      \begin{subfigure}[b]{0.48\linewidth}
        \includegraphics[width=\linewidth]{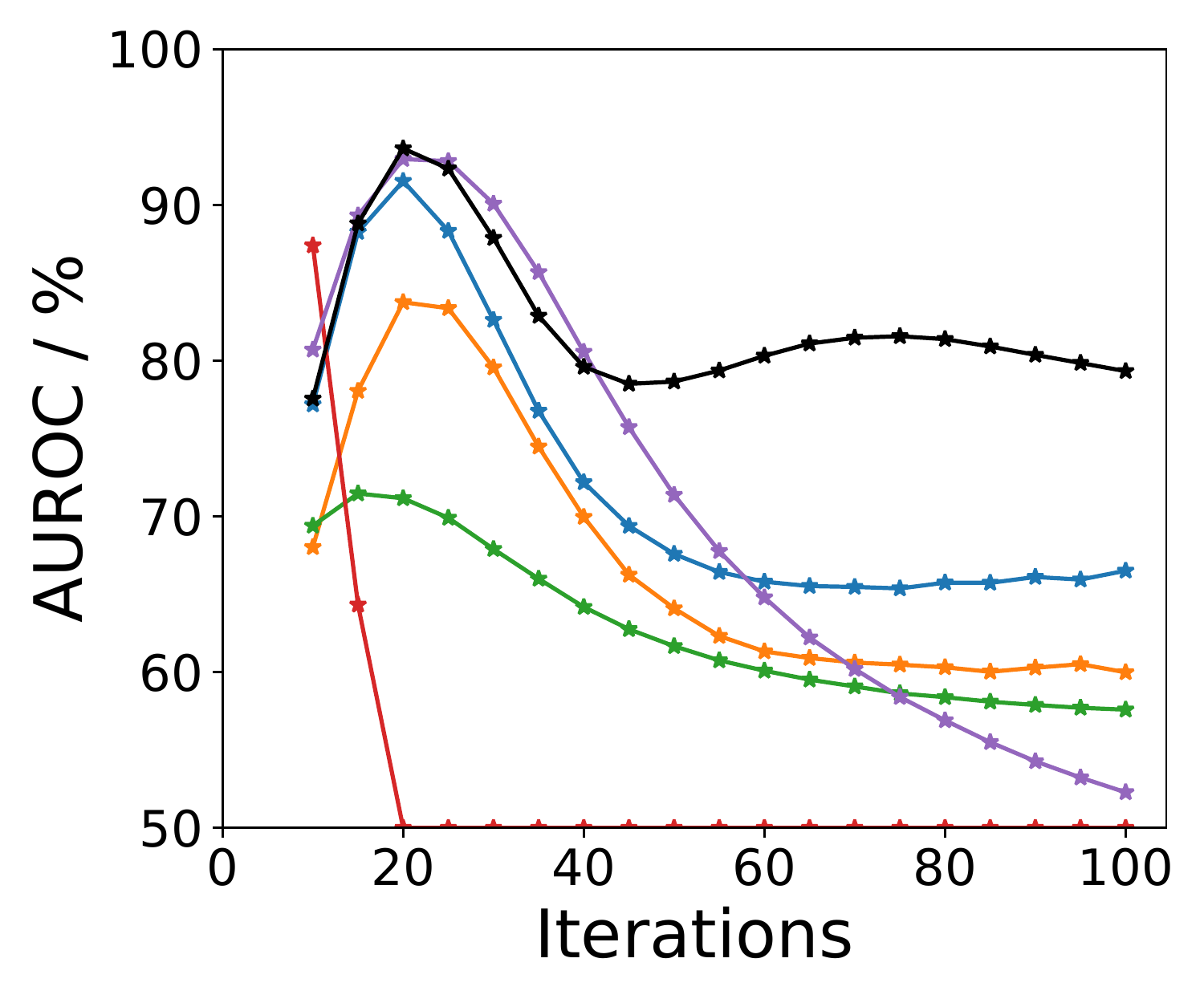}
        \caption{MNIST}
      \end{subfigure}
      \begin{subfigure}[b]{0.48\linewidth}
        \includegraphics[width=\linewidth]{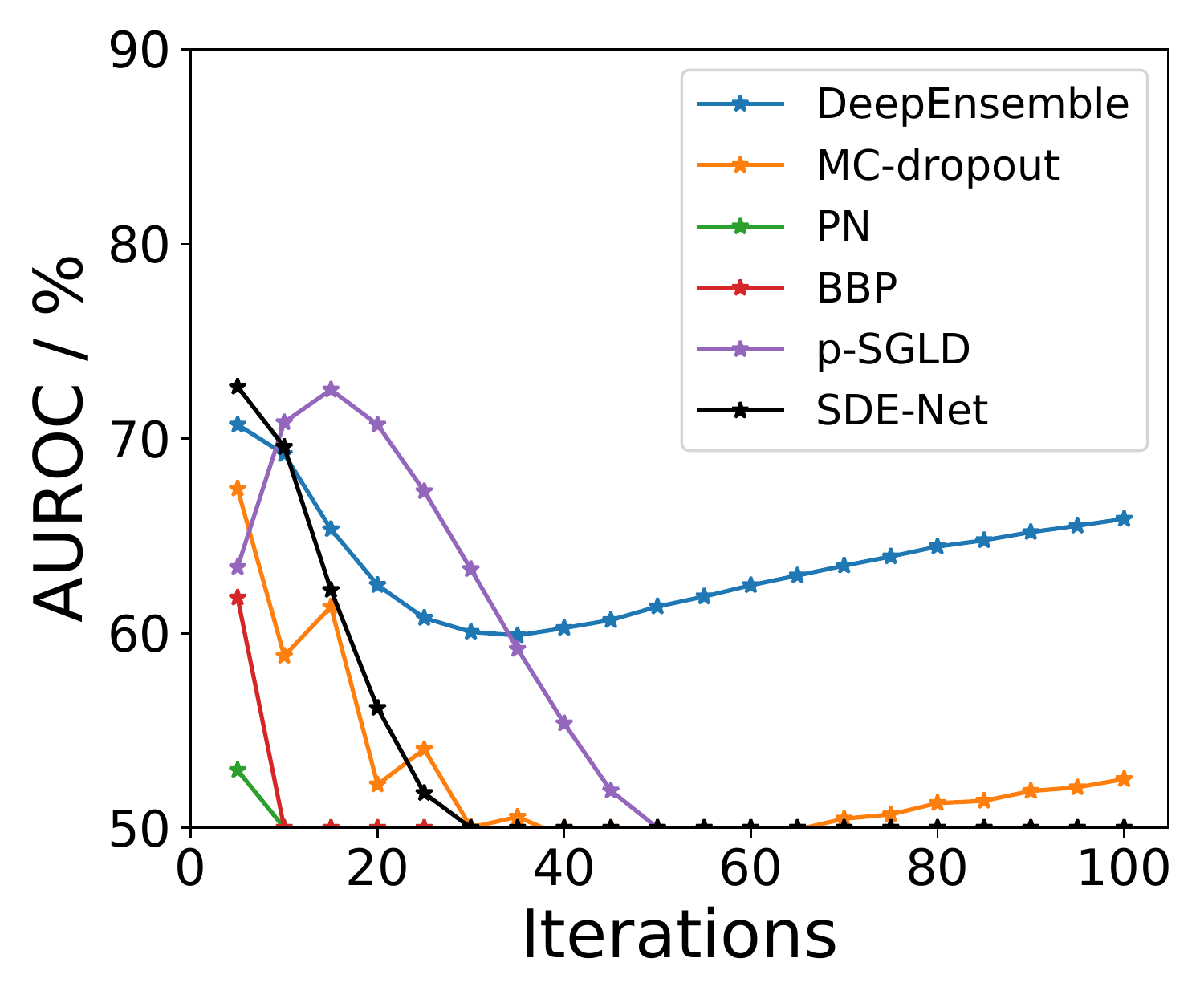}
        \caption{SVHN}
      \end{subfigure}
      \caption{The performance of adversarial sample detection under PGD attacks.}
      \label{fig:adv:PGD}
    \end{figure}

    \subsection{Active Learning}
    
    Finally, we study how the estimated uncertainties can improve label
    efficiency for active learning. Uncertainty plays an important role in
    active learning. Intuitively, accurate uncertainty estimates can dramatically
    reduce the amount of labeled data for model training, while inaccurate
    estimates make the model choose uninformative instances and even lead to worse
    performance due to overfitting. For active learning, we use the acquisition
    function proposed in \citep{hafner2018reliable}:
    \begin{equation}
        \{\bm{x}_{\rm new}, y_{\rm new}\} \sim p_{\rm new}(\bm{x},y) \propto \left( 1+\frac{{\rm Var}[\mu(\bm{x})]}{\sigma^2(\bm{x})} \right)^{2}.
    \end{equation}
    This acquisition function allows us to extract the data from the region where the model has high
    epistemic uncertainty but the data has low aleatoric noise. For the deterministic neural network, we use the predictive variance as a proxy since it cannot model epistemic uncertainty.
    
    \begin{figure}[t]
        \centering
        \includegraphics[width=0.92\linewidth]{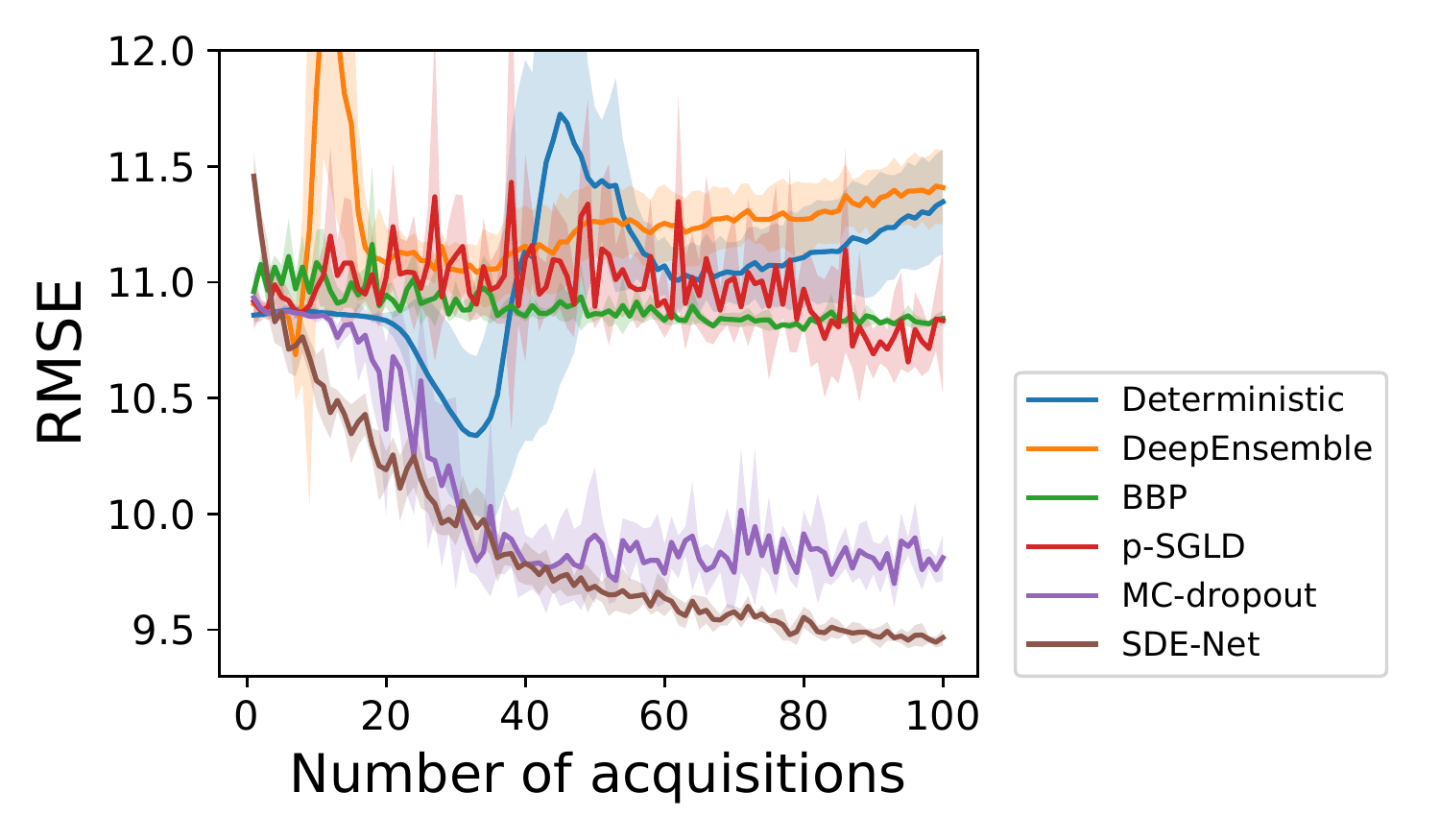}
        \caption{The performance of different models for active learning on the Year Prediction MSD dataset. We report the average performance and standard deviation for 5 random initializations.}
        \label{fig:active_learning}
    \end{figure}
    
    We use the Year Prediction MSD regression dataset, where the task is to predict the release year of a song from 90 audio features. It has total 515,345 data points of which 463,715 are for training.
    We experiment with the following procedure.  Starting from 50 labels, the models select a batch of 50 additional labels in every 100 epochs.  
    The remaining data points in the training dataset are available for acquisition, and we evaluate performance on the whole test set. 
    
    As we can see from Fig.~\ref{fig:active_learning}, 
    the RMSE of SDE-Net consistently decreases as we acquire more labeled data. Such results show that SDE-Net successfully acquire data from informative region. 
    However, the performance gain of BBP and p-SGLD are still negligible even after 100 acquisitions. 
    We can also observe that the performance of the deterministic NN and
    DeepEnsemble start to degrade after several iterations. This is because they
    keep extracting uninformative data points and thus suffer from overfitting due 
    to the small training data size.

    \vspace{-0.5em}
    \section{Additional Related Work}
    
    \textbf{Uncertainty estimation}: BNNs is a principled way for uncertainty quantification. Performing exact Bayesian inference is inefficient and computationally intractable. A common workaround is to use approximation methods like variational inference \citep{Blundell:2015, louizos2017multiplicative, shi2018kernel,louizos2016structured,pmlr-v80-zhang18l}, Laplace approximation \citep{ritter2018a}, expectation propagation \citep{Li:2015}, stochastic gradient MCMC \citep{Li:2016:PSG:3016100.3016149, Welling:2011} and so on. \citet{Gal2015, gal2015bayesian} proposed to use Monte-Carlo Dropout (MC-dropout) at test time to estimate the uncertainty which has a nice interpretation in terms of variational Bayes.
    Another key element which can affect the performance of BNNs is the choice of prior distribution. The most common prior to use is the independent Gaussian distribution which can only give limited and even biased information for uncertainty. Recently, \citet{hafner2018reliable} proposed to use noise contrastive priors (NCPs) to obtain reliable uncertainty estimates. 
    Functional variational BNNs (fBNNs) \citep{sun2018functional} employ Gaussian Process (GP) priors and use BNNs for inference.

    A number of non-Bayesian methods have also been proposed for uncertainty
    quantification. DeepEnsemble \citep{Lakshminarayanan2016} trains an ensemble of
    NNs and reports competitive uncertainty estimates to MC dropout.
    \citet{2017:Pereyra} adds an entropy penalty as the network regularizer. In
    \citep{lee2018training}, the authors proposed to minimize a new confidence loss to both
    a sharp predictive distribution for training data and a flat predictive
    distribution for OOD data. The OOD data is generated by using a generative
    model. Prior network \citep{Malinin2018,malinin2019reverse} parametrized a Dirichlet distribution
    over categorical output distributions which allows high uncertainty for OOD
    data, but it is only applicable to classification tasks.

    \textbf{Neural dynamic system}:  \citet{2017:E} first observed the link between
    ResNet and ODE \citep{Ince1956}. The residual block which is formulated as
    $x_{n+1}=x_n+f(x_n)$ can be considered as the forward Euler discretization of
    the ODE $dx_t=f(x_t)$. In \citep{Lu2017}, the authors show that many
    state-of-the-art deep network architectures, such as PolyNet
    \citep{2016:Zhang}, FractalNet \citep{2016:Larsson} and RevNet
    \citep{2017:Gomez}, can be regarded as different discretization schemes of
    ODEs.  \citet{Chen2018} further generalized the discrete ResNet to a
    continuous-depth network by making use of the existing ODE solvers. The adjoint
    method \citep{2006:Plessix} is used during ODE-Net training, which allows
    constant memory cost and adaptive computation. 
    However, these works all focus on improving the predictive accuracy while our work quantifies model
    uncertainty based on the SDE formulation and the introduced Brownian motion
    term. Concurrently with this paper, \citet{Tzen:2019} establish a connection
    between infinitely deep residual networks and solutions to SDE.
    \citet{li2020scalable} propose a generalization of the adjoint method to
    compute gradients through solutions of SDEs and apply a latent SDE for
    continuous time-series data modeling. Our approaches were developed
    simultaneously but focus on using neural SDEs for uncertainty quantification.

    \vspace{-1.2ex}
    \section{Conclusion} 
    
    We proposed a neural stochastic differential equation model (SDE-Net) for
    quantifying uncertainties in deep neural nets. The proposed model can separate
    different sources of uncertainties compared with existing non-Bayesian methods
    while being much simpler and more straightforward than Bayesian neural nets. 
    Through comprehensive experiments, we demonstrated that SDE-Net has strong performance compared to
    state-of-the-art techniques for uncertainty quantification on both
    classification and regression tasks. To the best of our knowledge, our work
    represents the first study which establishes the connection between stochastic
    dynamical system and neural nets for uncertainty quantification. As the approach
    is general and efficient, we believe this is a promising direction for
    equipping neural nets with meaningful uncertainties in many safety-critical
    applications.

\section*{Acknowledgements}
We would like to thank Srijan Kumar and the anonymous reviewers for their helpful comments.
This work was in part supported by the National Science Foundation award IIS-1418511, CCF-1533768 and IIS-1838042, the National Institute of Health award 1R01MD011682-01 and R56HL138415.

\bibliography{references.bib}

\begin{thebibliography}{51}
\providecommand{\natexlab}[1]{#1}
\providecommand{\url}[1]{\texttt{#1}}
\expandafter\ifx\csname urlstyle\endcsname\relax
  \providecommand{\doi}[1]{doi: #1}\else
  \providecommand{\doi}{doi: \begingroup \urlstyle{rm}\Url}\fi

\bibitem[Bos()]{Boston}
Boston dataset.
\newblock
  \url{https://www.cs.toronto.edu/~delve/data/boston/bostonDetail.html}.

\bibitem[Anil et~al.(2019)Anil, Lucas, and Grosse]{2018:Anil}
Anil, C., Lucas, J., and Grosse, R.
\newblock Sorting out lipschitz function approximation.
\newblock pp.\  291--301, 2019.

\bibitem[Bass(2011)]{bass_2011}
Bass, R.~F.
\newblock \emph{Stochastic Processes}.
\newblock Cambridge Series in Statistical and Probabilistic Mathematics.
  Cambridge University Press, 2011.

\bibitem[Blundell et~al.(2015)Blundell, Cornebise, Kavukcuoglu, and
  Wierstra]{Blundell:2015}
Blundell, C., Cornebise, J., Kavukcuoglu, K., and Wierstra, D.
\newblock Weight uncertainty in neural networks.
\newblock In \emph{International Conference on Machine Learning}, pp.\
  1613--1622, 2015.

\bibitem[Chen et~al.(2018)Chen, Rubanova, Bettencourt, and Duvenaud]{Chen2018}
Chen, T.~Q., Rubanova, Y., Bettencourt, J., and Duvenaud, D.~K.
\newblock Neural ordinary differential equations.
\newblock In \emph{Advances in Neural Information Processing Systems}, pp.\
  6571--6583, 2018.

\bibitem[Choukroun et~al.(2016)Choukroun, Cosso, et~al.]{2014:Choukroun}
Choukroun, S., Cosso, A., et~al.
\newblock Backward sde representation for stochastic control problems with
  nondominated controlled intensity.
\newblock \emph{The Annals of Applied Probability}, 26\penalty0 (2):\penalty0
  1208--1259, 2016.

\bibitem[Denker \& Lecun(1991)Denker and Lecun]{1991:Denker}
Denker, J. and Lecun, Y.
\newblock Transforming neural-net output levels to probability distributions.
\newblock In \emph{Advances in Neural Information Processing Systems}, pp.\
  853--859, 1991.

\bibitem[Dua \& Graff(2017)Dua and Graff]{Dua:2019}
Dua, D. and Graff, C.
\newblock {UCI} machine learning repository, 2017.
\newblock URL \url{http://archive.ics.uci.edu/ml}.

\bibitem[E(2017)]{2017:E}
E, W.
\newblock A proposal on machine learning via dynamical systems.
\newblock \emph{Communications in Mathematics and Statistics}, 5:\penalty0
  1--11, 02 2017.

\bibitem[Gal \& Ghahramani(2015)Gal and Ghahramani]{gal2015bayesian}
Gal, Y. and Ghahramani, Z.
\newblock Bayesian convolutional neural networks with bernoulli approximate
  variational inference.
\newblock \emph{arXiv preprint arXiv:1506.02158}, 2015.

\bibitem[Gal \& Ghahramani(2016)Gal and Ghahramani]{Gal2015}
Gal, Y. and Ghahramani, Z.
\newblock Dropout as a bayesian approximation: Representing model uncertainty
  in deep learning.
\newblock In \emph{International Conference on Machine Learning}, pp.\
  1050--1059, 2016.

\bibitem[Geifman et~al.(2019)Geifman, Uziel, and
  El-Yaniv]{geifman2018biasreduced}
Geifman, Y., Uziel, G., and El-Yaniv, R.
\newblock Bias-reduced uncertainty estimation for deep neural classifiers.
\newblock In \emph{International Conference on Learning Representations}, 2019.

\bibitem[Gomez et~al.(2017)Gomez, Ren, Urtasun, and Grosse]{2017:Gomez}
Gomez, A.~N., Ren, M., Urtasun, R., and Grosse, R.~B.
\newblock The reversible residual network: Backpropagation without storing
  activations.
\newblock In \emph{Advances in Neural Information Processing Systems}, pp.\
  2214--2224, 2017.

\bibitem[Goodfellow et~al.(2015{\natexlab{a}})Goodfellow, Shlens, and
  Szegedy]{FGSM}
Goodfellow, I., Shlens, J., and Szegedy, C.
\newblock Explaining and harnessing adversarial examples.
\newblock In \emph{International Conference on Learning Representations},
  2015{\natexlab{a}}.

\bibitem[Goodfellow et~al.(2015{\natexlab{b}})Goodfellow, Shlens, and
  Szegedy]{Good:2014}
Goodfellow, I., Shlens, J., and Szegedy, C.
\newblock Explaining and harnessing adversarial examples.
\newblock 2015{\natexlab{b}}.

\bibitem[Guo et~al.(2017)Guo, Pleiss, Sun, and Weinberger]{Guo:2017}
Guo, C., Pleiss, G., Sun, Y., and Weinberger, K.~Q.
\newblock On calibration of modern neural networks.
\newblock In \emph{International Conference on Machine Learning}, pp.\
  1321--1330, 2017.

\bibitem[Hafner et~al.(2018)Hafner, Tran, Lillicrap, Irpan, and
  Davidson]{hafner2018reliable}
Hafner, D., Tran, D., Lillicrap, T., Irpan, A., and Davidson, J.
\newblock Noise contrastive priors for functional uncertainty.
\newblock \emph{arXiv preprint arXiv:1807.09289}, 2018.

\bibitem[He et~al.(2016)He, Zhang, Ren, and Sun]{2015:He}
He, K., Zhang, X., Ren, S., and Sun, J.
\newblock Deep residual learning for image recognition.
\newblock pp.\  770--778, 2016.

\bibitem[Hendrycks \& Gimpel(2017)Hendrycks and Gimpel]{Hendrycks:2016}
Hendrycks, D. and Gimpel, K.
\newblock A baseline for detecting misclassified and out-of-distribution
  examples in neural networks.
\newblock In \emph{International Conference on Learning Representations}, 2017.

\bibitem[Ince(1956)]{Ince1956}
Ince, E.
\newblock \emph{{Ordinary Differential Equations}}.
\newblock Courier Corporation, 1956.
\newblock ISBN 0486603490.

\bibitem[Kendall \& Gal(2017)Kendall and Gal]{2017:kendall}
Kendall, A. and Gal, Y.
\newblock What uncertainties do we need in bayesian deep learning for computer
  vision?
\newblock In \emph{Advances in Neural Information Processing Systems}, pp.\
  5574--5584, 2017.

\bibitem[Kloeden \& Platen(1992)Kloeden and Platen]{Kloeden}
Kloeden, P.~E. and Platen, E.
\newblock \emph{Numerical solution of stochastic differential equations}.
\newblock Springer-Verlag Berlin, 1992.

\bibitem[Krizhevsky et~al.(2012)Krizhevsky, Sutskever, and Hinton]{2012:alex}
Krizhevsky, A., Sutskever, I., and Hinton, G.~E.
\newblock Imagenet classification with deep convolutional neural networks.
\newblock In \emph{Advances in Neural Information Processing Systems}, pp.\
  1097--1105. 2012.

\bibitem[Lakshminarayanan et~al.(2017)Lakshminarayanan, Pritzel, and
  Blundell]{Lakshminarayanan2016}
Lakshminarayanan, B., Pritzel, A., and Blundell, C.
\newblock Simple and scalable predictive uncertainty estimation using deep
  ensembles.
\newblock In \emph{Advances in Neural Information Processing Systems}, pp.\
  6402--6413, 2017.

\bibitem[Lalley(2016)]{lalley2016stochastic}
Lalley, S.~P.
\newblock Stochastic differential equations.
\newblock In \emph{University of Chicago}, 2016.

\bibitem[Larsson et~al.(2017)Larsson, Maire, and Shakhnarovich]{2016:Larsson}
Larsson, G., Maire, M., and Shakhnarovich, G.
\newblock Fractalnet: Ultra-deep neural networks without residuals.
\newblock In \emph{International Conference on Learning Representations}, 2017.

\bibitem[Lee et~al.(2018)Lee, Lee, Lee, and Shin]{lee2018training}
Lee, K., Lee, H., Lee, K., and Shin, J.
\newblock Training confidence-calibrated classifiers for detecting
  out-of-distribution samples.
\newblock In \emph{International Conference on Learning Representations}, 2018.

\bibitem[Li et~al.(2016)Li, Chen, Carlson, and
  Carin]{Li:2016:PSG:3016100.3016149}
Li, C., Chen, C., Carlson, D., and Carin, L.
\newblock Preconditioned stochastic gradient langevin dynamics for deep neural
  networks.
\newblock In \emph{AAAI Conference on Artificial Intelligence}, pp.\
  1788--1794, 2016.

\bibitem[Li et~al.(2020)Li, Wong, Chen, and Duvenaud]{li2020scalable}
Li, X., Wong, T.-K.~L., Chen, R.~T., and Duvenaud, D.
\newblock Scalable gradients for stochastic differential equations.
\newblock \emph{arXiv preprint arXiv:2001.01328}, 2020.

\bibitem[Li(2017)]{2018:Li}
Li, Y.
\newblock Deep reinforcement learning: An overview.
\newblock \emph{arXiv preprint arXiv:1701.07274}, 2017.

\bibitem[Li et~al.(2015)Li, Hern\'{a}ndez-Lobato, and Turner]{Li:2015}
Li, Y., Hern\'{a}ndez-Lobato, J.~M., and Turner, R.~E.
\newblock Stochastic expectation propagation.
\newblock In \emph{Advances in Neural Information Processing Systems}, pp.\
  2323--2331, 2015.

\bibitem[Louizos \& Welling(2016)Louizos and Welling]{louizos2016structured}
Louizos, C. and Welling, M.
\newblock Structured and efficient variational deep learning with matrix
  gaussian posteriors.
\newblock In \emph{International Conference on Machine Learning}, pp.\
  1708--1716, 2016.

\bibitem[Louizos \& Welling(2017)Louizos and
  Welling]{louizos2017multiplicative}
Louizos, C. and Welling, M.
\newblock Multiplicative normalizing flows for variational bayesian neural
  networks.
\newblock In \emph{Proceedings of the 34th International Conference on Machine
  Learning-Volume 70}, pp.\  2218--2227, 2017.

\bibitem[Lu et~al.(2018)Lu, Zhong, Li, and Dong]{Lu2017}
Lu, Y., Zhong, A., Li, Q., and Dong, B.
\newblock Beyond finite layer neural networks: Bridging deep architectures and
  numerical differential equations.
\newblock In \emph{International Conference on Learning Representations}, 2018.

\bibitem[MacKay(1992)]{1992:Mackay}
MacKay, D. J.~C.
\newblock A practical bayesian framework for backpropagation networks.
\newblock \emph{Neural Computation}, 4\penalty0 (3):\penalty0 448--472, 1992.

\bibitem[Madry et~al.(2018)Madry, Makelov, Schmidt, Tsipras, and
  Vladu]{madry2018towards}
Madry, A., Makelov, A., Schmidt, L., Tsipras, D., and Vladu, A.
\newblock Towards deep learning models resistant to adversarial attacks.
\newblock In \emph{International Conference on Learning Representations}, 2018.

\bibitem[Malinin \& Gales(2019)Malinin and Gales]{malinin2019reverse}
Malinin, A. and Gales, M.
\newblock Reverse kl-divergence training of prior networks: Improved
  uncertainty and adversarial robustness.
\newblock In \emph{Advances in Neural Information Processing Systems}, pp.\
  14520--14531, 2019.

\bibitem[Malinin \& Gales(2018)Malinin and Gales]{Malinin2018}
Malinin, A. and Gales, M. J.~F.
\newblock Predictive uncertainty estimation via prior networks.
\newblock In \emph{Advances in Neural Information Processing Systems}, pp.\
  7047--7058, 2018.

\bibitem[Nguyen et~al.(2015)Nguyen, Yosinski, and Clune]{Nguyen_2015_CVPR}
Nguyen, A., Yosinski, J., and Clune, J.
\newblock Deep neural networks are easily fooled: High confidence predictions
  for unrecognizable images.
\newblock In \emph{IEEE Conference on Computer Vision and Pattern Recognition},
  pp.\  427--436, 2015.

\bibitem[{Pereyra} et~al.(2017){Pereyra}, {Tucker}, {Chorowski}, {Kaiser}, and
  {Hinton}]{2017:Pereyra}
{Pereyra}, G., {Tucker}, G., {Chorowski}, J., {Kaiser}, {\L}., and {Hinton}, G.
\newblock {Regularizing Neural Networks by Penalizing Confident Output
  Distributions}.
\newblock In \emph{International Conference on Learning Representations}, 2017.

\bibitem[Platen(1999)]{platen1999introduction}
Platen, E.
\newblock An introduction to numerical methods for stochastic differential
  equations.
\newblock \emph{Acta numerica}, 8:\penalty0 197--246, 1999.

\bibitem[Plessix(2006)]{2006:Plessix}
Plessix, R.-E.
\newblock A review of the adjoint state method for computing the gradient of a
  functional with geophysical applications.
\newblock \emph{Geophysical Journal International}, 167:\penalty0 495 -- 503,
  11 2006.

\bibitem[Ritter et~al.(2018)Ritter, Botev, and Barber]{ritter2018a}
Ritter, H., Botev, A., and Barber, D.
\newblock A scalable laplace approximation for neural networks.
\newblock In \emph{International Conference on Learning Representations}, 2018.

\bibitem[Shi et~al.(2018)Shi, Sun, and Zhu]{shi2018kernel}
Shi, J., Sun, S., and Zhu, J.
\newblock Kernel implicit variational inference.
\newblock In \emph{International Conference on Learning Representations}, 2018.

\bibitem[Sun et~al.()Sun, Zhang, Shi, and Grosse]{sun2018functional}
Sun, S., Zhang, G., Shi, J., and Grosse, R.
\newblock Functional variational bayesian neural networks.
\newblock In \emph{International Conference on Learning Representations}.

\bibitem[Szegedy et~al.(2014)Szegedy, Zaremba, Sutskever, Bruna, Erhan,
  Goodfellow, and Fergus]{Intri}
Szegedy, C., Zaremba, W., Sutskever, I., Bruna, J., Erhan, D., Goodfellow, I.,
  and Fergus, R.
\newblock Intriguing properties of neural networks.
\newblock In \emph{International Conference on Learning Representations}, 2014.

\bibitem[Tzen \& Raginsky(2019)Tzen and Raginsky]{Tzen:2019}
Tzen, B. and Raginsky, M.
\newblock Neural stochastic differential equations: Deep latent gaussian models
  in the diffusion limit.
\newblock \emph{arXiv preprint arXiv:1905.09883}, 2019.

\bibitem[Welling \& Teh(2011)Welling and Teh]{Welling:2011}
Welling, M. and Teh, Y.~W.
\newblock Bayesian learning via stochastic gradient langevin dynamics.
\newblock In \emph{International Conference on International Conference on
  Machine Learning}, pp.\  681--688, 2011.

\bibitem[Wu et~al.(2019)Wu, Nowozin, Meeds, Turner, Hernandez-Lobato, and
  Gaunt]{wu2018deterministic}
Wu, A., Nowozin, S., Meeds, E., Turner, R.~E., Hernandez-Lobato, J.~M., and
  Gaunt, A.~L.
\newblock Deterministic variational inference for robust bayesian neural
  networks.
\newblock In \emph{International Conference on Learning Representations}, 2019.

\bibitem[Zhang et~al.(2018)Zhang, Sun, Duvenaud, and Grosse]{pmlr-v80-zhang18l}
Zhang, G., Sun, S., Duvenaud, D., and Grosse, R.
\newblock Noisy natural gradient as variational inference.
\newblock In \emph{International Conference on Machine Learning}, pp.\
  5852--5861, 2018.

\bibitem[Zhang et~al.(2017)Zhang, Li, Loy, and Lin]{2016:Zhang}
Zhang, X., Li, Z., Loy, C.~C., and Lin, D.
\newblock Polynet: A pursuit of structural diversity in very deep networks.
\newblock \emph{IEEE Conference on Computer Vision and Pattern Recognition},
  pp.\  3900--3908, 2017.

\end{thebibliography}
\bibliographystyle{icml2020}

\newpage

\icmltitlerunning{Supplementary Material for SDE-Net: Equipping Deep Neural Networks with Uncertainty Estimates}

\onecolumn
\icmltitle{Supplementary Material for SDE-Net: Equipping Deep Neural Networks with Uncertainty Estimates}

\renewcommand{\thesection}{S. \arabic{section}}

\setcounter{section}{0}
\section{Proof of Theorem 1}
Theorem 1 can be seen as a special case of the existence and uniqueness theorem of a general stochastic differential equation. The following derivation is adapted from \citep{lalley2016stochastic}.
To prove Theorem 1, we first introduce two lemmas.

\setcounter{theorem}{0}
\begin{lemma}
    Let $y(t)$ be a nonnegative function that satisfies the following condition: for some $T\le \infty$, there exist constants $A, B \ge 0$ such that:
    \begin{equation} 
        y(t) \le A+ B\int_0^ty(s)ds <\infty \quad \text{for all} \quad 0\le t \le T.
        \label{eq:lemma1}
    \end{equation}
    Then
    \begin{equation}
        y(t) \le Ae^{Bt} \quad \text{for all} \quad 0\le t\le T.
        \label{eq:lemma12}
    \end{equation}
\end{lemma}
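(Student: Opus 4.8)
The plan is to establish this integral inequality---the Gr\"onwall--Bellman lemma---by the integrating-factor technique. First I would introduce the auxiliary function $z(t) = \int_0^t y(s)\,ds$, so that $z(0)=0$. Since the hypothesis guarantees $\int_0^t y(s)\,ds<\infty$, the map $y$ is locally integrable on $[0,T]$, hence $z$ is absolutely continuous with $z'(t)=y(t)$ for almost every $t$. The assumed bound \eqref{eq:lemma1} then rewrites as the differential inequality $z'(t)\le A+Bz(t)$, holding almost everywhere.

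Next I would multiply through by the integrating factor $e^{-Bt}$. The key observation is that $\frac{d}{dt}\bigl(e^{-Bt}z(t)\bigr)=e^{-Bt}\bigl(z'(t)-Bz(t)\bigr)\le Ae^{-Bt}$ almost everywhere. Since $e^{-Bt}z(t)$ is absolutely continuous, I may integrate this bound over $[0,t]$ and use $z(0)=0$ to get $e^{-Bt}z(t)\le\int_0^t Ae^{-Bs}\,ds=\frac{A}{B}\bigl(1-e^{-Bt}\bigr)$, valid when $B>0$. Rearranging yields $z(t)\le\frac{A}{B}\bigl(e^{Bt}-1\bigr)$.

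Finally I would substitute this back into the original hypothesis: $y(t)\le A+Bz(t)\le A+A\bigl(e^{Bt}-1\bigr)=Ae^{Bt}$, which is exactly \eqref{eq:lemma12}. The degenerate case $B=0$ is handled separately and trivially, since then \eqref{eq:lemma1} already reads $y(t)\le A=Ae^{0\cdot t}$.

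I expect the only genuine subtlety to be the regularity of $y$: it is assumed merely nonnegative with a finite integral, not continuous, so I cannot invoke the classical fundamental theorem of calculus directly. The remedy is to phrase everything in terms of absolute continuity and almost-everywhere derivatives, which is what legitimizes the integrating-factor manipulation. An alternative that avoids this issue is to iterate \eqref{eq:lemma1} into itself $n$ times, producing the partial sum $A\sum_{k=0}^{n-1}(Bt)^k/k!$ plus a remainder equal to $B^n$ times an $n$-fold nested integral $R_n$ of $y$ over a simplex; bounding $R_n\le Y\,t^{n-1}/(n-1)!$ with $Y=\int_0^T y\,ds<\infty$ shows the remainder tends to $0$ as $n\to\infty$, and the partial sum converges to $Ae^{Bt}$, recovering the claim using only the finiteness of the integral.
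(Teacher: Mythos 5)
Your proposal is correct, but your primary argument takes a genuinely different route from the paper's. The paper proves the lemma by Picard-style iteration: it first observes that $y$ is bounded by $D \equiv A+BC$ on $[0,T]$, where $C=\int_0^T y(s)\,ds$, then substitutes the hypothesis into itself $k$ times so that the accumulated terms reproduce the partial sums of the series for $Ae^{Bt}$, while the remainder is a $(k+1)$-fold iterated integral of $y$ bounded by $B^kDt^{k+1}/(k+1)!$, which tends to zero uniformly on $[0,T]$ as $k\to\infty$. This is essentially the ``alternative'' you sketch in your final paragraph (you bound the remainder using the total integral $Y=\int_0^T y\,ds$ rather than the pointwise bound $D$; either works). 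Your main route --- setting $z(t)=\int_0^t y(s)\,ds$, deriving the differential inequality $z'\le A+Bz$ almost everywhere, and applying the integrating factor $e^{-Bt}$ --- is the standard textbook proof of Gr\"onwall--Bellman and is shorter; its only cost is the measure-theoretic care you correctly flag (absolute continuity of $z$ and almost-everywhere differentiation, since $y$ is merely nonnegative with a finite integral, not continuous), plus the separate but trivial case $B=0$. The iteration proof used in the paper avoids differentiation entirely, relying only on monotonicity of the integral and the finiteness hypothesis, at the price of being longer. Both arguments are sound and establish the same conclusion under the stated hypotheses.
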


\begin{proof}
    W.l.o.g., we assume that $C=\int_{0}^Ty(s)ds<\infty$ and that $T<\infty$. Then, we can obtain that $y(t)$ is bounded by 
    $D\equiv A+BC$ in the interval $[0,T]$. By iterating over inequality \eqref{eq:lemma1}, we have:
    \begin{equation}
        \begin{split}
            y(t) &\le  A + B\int_{0}^t y(s)ds \\
            &\le A + B\int_0^t(A+B) \int_0^s y(r)drds\\
            & \le A + BAt + B^2\int_0^t\int_0^s (A + B \int_0^r y(q)dq)drds\\
            & \le A + BAt + B^2At^2/2! + B^3\int_0^t\int_0^s\int_0^r(A+B\int_0^qy(p)dp)dqdrds \\
            & \le \cdots.
        \end{split}
    \end{equation}

After $k$ iterations, the first $k$ terms are the series for $Ae^{Bt}$. The last term is a $(k+1)$-fold iterated integral $I_k$. Because
 $y(t) \le D$ in the interval $[0,T]$, the integral $I_k$ is bounded by $B^kDt^{k+1}/(k+1)!$. This converges to zero uniformly for 
$t\le T$ as $k\rightarrow \infty$. Hence, inequality \eqref{eq:lemma12} follows.
\end{proof}

\begin{lemma}
    Let $y_n(t)$ be a sequence of nonnegative functions such that for some constants $B, C < \infty$,
    \begin{equation}
        \begin{split}
            &\quad y_0(t)\le C \quad \text{for all} \quad t\le T \quad \text{and} \\
            &y_{n+1}(t) \le B\int_{0}^ty_n(s)ds<\infty \quad \text{for all} \quad t\le T \quad \text{and} \quad n=0,1,2,\cdots.
        \end{split}
    \end{equation}
    Then,
    \begin{equation}
        y_n(t) \le CB^nt^n/n! \quad \text{for all} \quad t\le T.
    \end{equation}
\end{lemma}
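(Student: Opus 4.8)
The plan is to prove the bound by induction on $n$, using the nonnegativity of the functions $y_n$ to ensure that a pointwise bound on the integrand transfers to a bound on the integral.

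For the base case $n=0$, observe that $CB^0t^0/0!=C$ once we read $t^0=1$ and $0!=1$, so the claimed inequality $y_0(t)\le C$ is precisely the first hypothesis and there is nothing to prove. For the inductive step, I would assume $y_n(t)\le CB^nt^n/n!$ for all $t\le T$ and substitute this estimate into the recursion $y_{n+1}(t)\le B\int_0^t y_n(s)\,ds$. Because $y_n$ is nonnegative and dominated by $CB^ns^n/n!$, monotonicity of the integral gives the key computation
\[
y_{n+1}(t)\le B\int_0^t \frac{CB^n s^n}{n!}\,ds = \frac{CB^{n+1}}{n!}\cdot\frac{t^{n+1}}{n+1} = \frac{CB^{n+1}t^{n+1}}{(n+1)!},
\]
where the elementary integral $\int_0^t s^n\,ds=t^{n+1}/(n+1)$ and the factorial identity $n!\,(n+1)=(n+1)!$ have been used. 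This is exactly the asserted bound at index $n+1$, which closes the induction.

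There is no substantive obstacle here: the argument is a direct induction, and the only points requiring a little care are checking that the base case matches the formula after evaluating $0!$ and $t^0$, and explicitly invoking nonnegativity of $y_n$ so that the direction of the inequality is preserved when passing the bound under the integral sign. Both are immediate. Looking ahead, this lemma is the tool that will control the successive increments in a Picard-type iteration scheme for Theorem \ref{theorem:uniqueness}: the terms $CB^nt^n/n!$ are the general terms of the convergent series for $Ce^{Bt}$, so the estimate yields a uniformly (in $t\le T$) summable bound on the iterates and hence their convergence, while Lemma 1 supplies the complementary Gr\"onwall-type estimate used to pin down uniqueness of the limiting adapted process.
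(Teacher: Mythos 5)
Your induction is correct and is essentially the same argument as the paper's, which simply writes out the first few iterations ($y_1(t)\le BCt$, $y_2(t)\le CB^2t^2/2!$, \dots) and concludes the general bound after $n$ steps; you have merely made the base case and inductive step explicit. No gap to report.
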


\begin{proof}
    \begin{equation}
        \begin{split}
        y_1 (t) &\le B\int_0^t Cds = BCt \\
        y_2(t) & \le B \int_0^t BCs ds = CB^2t^2/2!\\
        y_3(t) & \le B \int_0^t CB^2s^2/2 ds = CB^3t^3/3! \\
        \cdots.
        \end{split}
    \end{equation}
After $n$ iterations, we have $y_n(t) \le CB^nt^n/n!$ for all $t \le T$.
\end{proof}

Suppose that for some initial value $\bm{x}_0$ there are two different solutions:
\begin{equation}
    \begin{split}
        &\bm{x}_t = \bm{x}_0+\int_{0}^tf(\bm{x}_s, s;\bm{\theta}_f)ds + \int_0^tg(\bm{x}_0;\bm{\theta}_g)dW_s \quad \text{and} \\
        & \bm{y}_t = \bm{x}_0+\int_{0}^tf(\bm{y}_s, s ; \bm{\theta}_f)ds + \int_0^tg(\bm{x}_0 ; \bm{\theta}_g)dW_s.
    \end{split}
\end{equation}
Since the diffusion net $g$ is uniformly Lipschitz, $\int_0^t g(\bm{x_0}; \bm{\theta}_g)dW_s$ is bounded in compact time intervals. Then, we substract these two solutions and get:
\begin{equation}
    \bm{x}_t - \bm{y}_t = \int_0^t (f(\bm{x}_s,s;\bm{\theta}_f)-f(\bm{y}_s,s;\bm{\theta}_f))ds.
\end{equation}
Since the drift net $f$ is uniformly Lipschitz, we have that for some constant $B < \infty$,
\begin{equation}
    |\bm{y}_t -  \bm{x}_t| \le B\int_0^t |\bm{y}_s-\bm{x}_s|ds \quad\text{for all}\quad t<\infty.
\end{equation}
It is obvious that $\bm{y}_t - \bm{x}_t \equiv 0$ from Lemma 1 by letting $A=0$. Thus, the stochastic differential equation has at most one solution for any particular
initial value $\bm{x}_0$. 

Then, we prove the the existence of the solutions. For a fix initial value $\bm{x}_0$, we define a sequence of adapted process $\bm{x}_n(t)$ by:
\begin{equation}
    \bm{x}_{n+1}(t) = \bm{x}_0 + \int_0^t f(\bm{x}_n(s),s;\bm{\theta}_f)ds + g(\bm{x_0}; \bm{\theta}_g)W_t
\end{equation}
The processes $\bm{x}_{n+1}(t)$ are well-defined and have continuous paths, by induction on $n$. Because the drift net $f$ is Lipschitz, we have:
\begin{equation}
    |\bm{x}_{n+1}(t) - \bm{x}_n (t)| \le B \int_0^t |\bm{x}_n(s)-\bm{x}_{n-1}(s)|ds.
\end{equation}
Therefore, Lemma 2 implies that for any $T< \infty$,
\begin{equation}
    |\bm{x}_{n+1}(t)-\bm{x}_n(t)|\le CB^nT^n/n! \quad \text{for all} \quad t \le T.
\end{equation}
It follows that the processes $\bm{x}_n(t)$ converge uniformly in compact time intervals $[0, T]$; thus the limit process $\bm{x}(t)$ has continuous trajectories according to 
the dominated convergence theorem and the continuity of $f$.

\section{Experimental Details}
\label{section:exp}

\subsection{Classification Setup Details}

\textbf{Data preprocessing.} As PN and SDE-Net both require OOD samples during the training process, we perturb training data by Gaussian noise as pseudo OOD data by default.  On both MNIST and SVHN, the mean of the Gaussian noise is set to zero and the variance is set to 4.

We have also experimented with using external data as OOD data for model training or test, which requires re-scaling external data to match the target dataset. Specifically, for the classification task on MNIST, we used SEMEION and upscaled the images to $28\times28$; we also tried CIFAR10 and transformed images into greyscale and downsampled them to $28\times 28$ size. 

\textbf{Model hyperparameters.} we use one SDE-Net block in replace of 6 residual blocks and set the number of subintervals as $N = 6$ for fair comparison.
We perform one forward propagation during training time and 10 forward propagations at test time. 
$\sigma_{\text{max}}=500$ was used for both MNIST and SVHN. To make the training procedure more stable, we use a smaller value of $\sigma_{\text{max}}$ during training. Specifically, we set $\sigma_{\text{max}}=20$ for MNIST and $\sigma_{\text{max}}=5$ for SVHN during trainining.

The dropout rate for MC-dropout is set to 0.1 as in \citep{Lakshminarayanan2016} (we also tested 0.5, but that setting performed worse). For DeepEnsemble, we use 5 ResNets in the ensemble. For PN, we set the concentration parameter to 1000 for both MNIST and SVHN as suggested in the original paper. 
We use the standard normal prior for both BBP and p-SGLD. The variances of the prior are set to 0.1 for BBP and 0.01 for p-SGLD to ensure convergence. We use 50 posterior samples for MC-dropout, BBP and p-SGLD at test time.

For PGD attack, we set the perturbations size $\epsilon$ to $0.3 \; (16/255)$ and step size to $2/255 \; (0.4/255)$ on MNIST (SVHN).

\textbf{Model optimization.} 
On the MNIST dataset, we use the stochastic gradient descent algorithm with momentum $0.9$, weight decay $5\times10^{-4}$, and mini-batch size 128.
BBP and p-SGLD are trained with 200 epochs to ensure convergence while other methods are trained with 40 epochs.
The initial learning rate is set to 0.1 for for drift network, MC-dropout and DeepEnsemble while 0.01 for PN. It then decreased at epoch 10, 20 and 30.
The learning rate for drift network is initially set to 0.01 and then decreased at epoch 15 and 30. The learning rate for BBP is initially set to 0.001 and then decreased at epoch 80 and 160. 
We use an initial learning rate 0.0001 for p-SGLD and then decreased it at epoch 50. The decrease rate for SGD learning rate is set to $0.1$.

On the SVHN dataset, we again use the stochastic gradient descent algorithm with momentum $0.9$ and weight decay $5\times10^{-4}$. 
BBP and p-SGLD are trained with 200 epochs to ensure convergence while other methods are trained with 60 epochs. The initial learning rate is set to 0.1 for 
for drift network, MC-dropout and DeepEnsemble while 0.01 for PN. It then decreased at epoch 20 and 40. The learning rate for diffusion network is set as 0.005 initially and then decreased at epoch 10 and 30. 
 p-SGLD uses a contant learning rate 0.0001. The learning rate for BBP is initially set to 0.001 and then decreased at epoch 80 and 160.

\subsection{Regression Setup Details}

\textbf{Data preprocessing.} We normalize both the features and targets (0 mean and 1 variance) for the regression task. We repeat the features of Boston Housing data 6 times and pad zeroes for the remaining entries to make the number of features of the two datasets equal. We perturb training data by Gaussian noise (zero mean and variance 4) as pseudo OOD data. 

\textbf{Model hyperparameters.}
The neural net used in the baselines has 6-hidden layers with ReLU nonlinearity. For fair comparison, we set the number of subintervals as $4$ and
then place two layers before and after the SDE-Net block respectively. The dropout rate for MC-dropout is set to 0.05 as in \citep{Gal2015}.
We set $\sigma_{\text{max}}$  to 0.01 initially and increase it to 0.5 at epoch 30. During training, we only perform 1 forward pass.
The number of stochastic forward passes is 10 for SDE-Net at test time. 20 posterior samples are used for MC-dropout, BBP and p-SGLD at test time. The variance is set to 0.1 for both BBP and p-SGLD to ensure convergence.

\textbf{Model optimization.}
We use the stochastic gradient descent algorithm with momentum $0.9$, weight decay $5\times10^{-4}$, and mini-batch size 128. The number of training epochs is 60.  
The learning rate for drift net is initially set to $0.0001$ and then deceased at epoch $20$. The learning rate for the diffusion net is set to 0.01. 
The learning rate for BBP and p-SGLD is initially set to $0.01$ and then decreased at epoch $20$. 
The learning rate for other baselines is initially set to $0.001$ and then decreased at epoch $20$.

\subsection{Active Learning Setup}
\textbf{Data preprocessing.} We normalize both the features and targets (0 mean and 1 variance) for the active learning task. We 
randomly select 50 samples from the original training set as the starting point.

\textbf{Model hyperparameters.} The network architecture and model hyperparameters are the same as those we used in the OOD detection task for regression.

\textbf{Model optimization.} We use the stochastic gradient descent algorithm with momentum $0.9$, weight decay $5\times10^{-4}$, and mini-batch size 50. The number of training epochs is 100. The learning rate for drift net and baselines is set to $0.0001$. The learning rate for the diffusion net is set to 0.01.

\section{Additional Experiments}
\subsection{Visulization Using Synthetic Dataset}
In this subsection, we demonstrate the capability of SDE-Net of obtaining meaningful epistemic uncertainties.
For this purpose, we generate a synthetic dataset from a mixture of two Gaussians.
Then, we train the SDE-Net on this toy dataset. Both the drift neural network and diffusion network have
one hidden layer with ReLU activation . 

Figure \ref{fig:un} shows the uncertainty obtained by SDE-Net. Specifically, it visualizes the epistemic uncertainty given by the 
variance of the Brownian motion term. As we can see, the uncertainty is low in the region covered by the training data while high outside the training distribution.

\begin{figure}[h!]
  \centering
  \begin{subfigure}[b]{0.45\linewidth}
   \includegraphics[width=\linewidth]{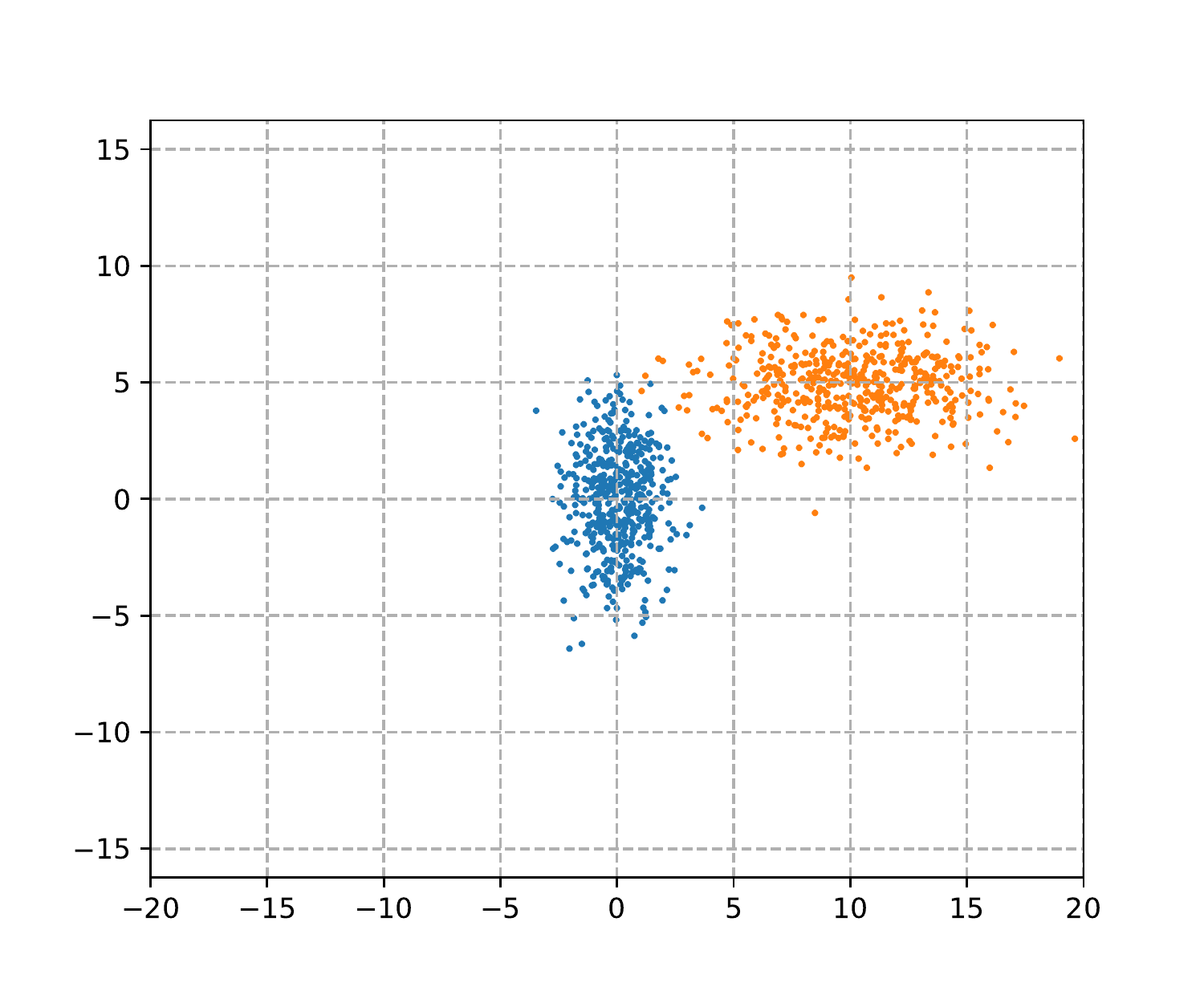}
    \caption{Training data distribution.}
  \end{subfigure}
  \begin{subfigure}[b]{0.45\linewidth}
    \includegraphics[width=\linewidth]{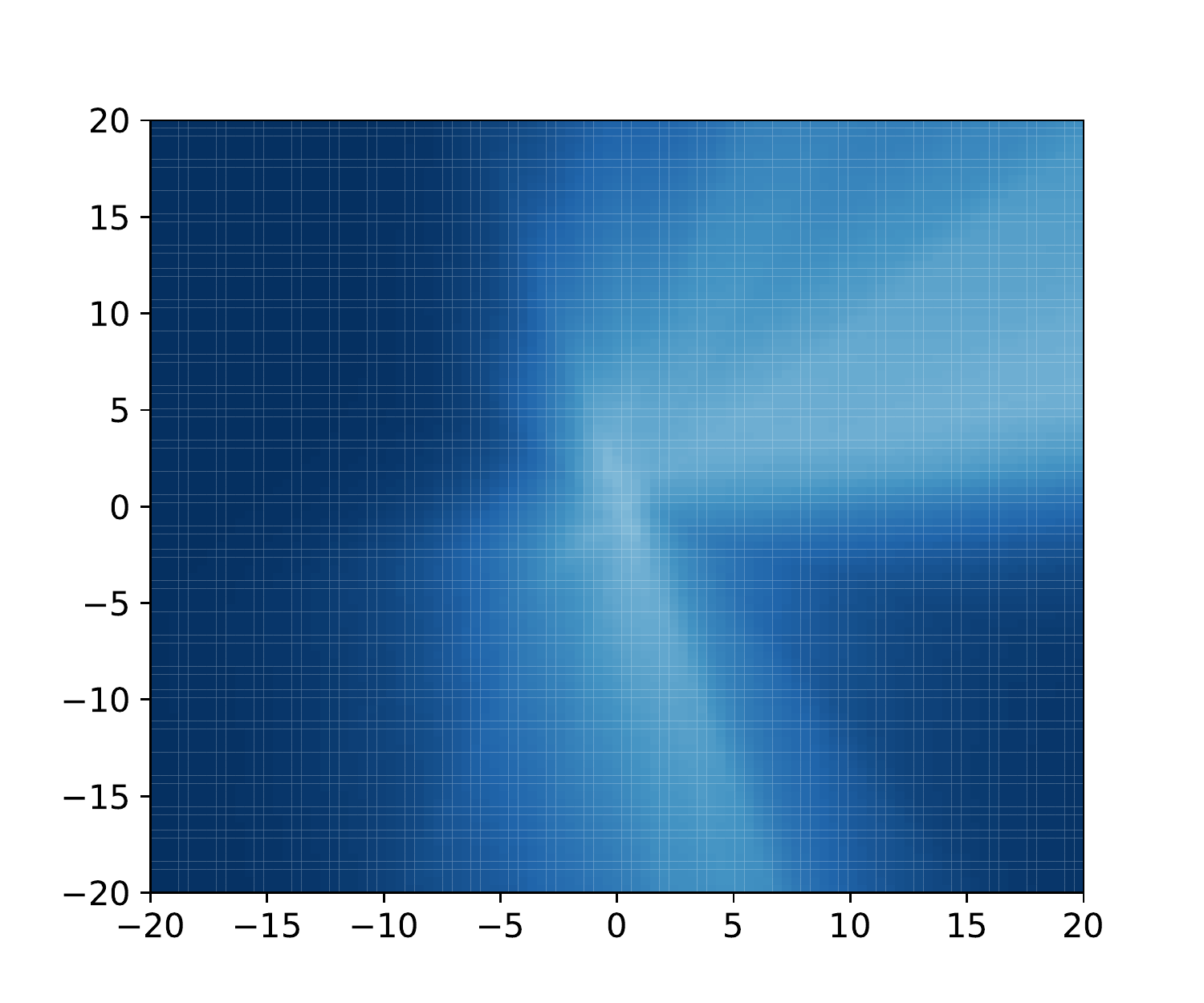}
    \caption{Epistemic Uncertainty estimated by SDE-Net.}
    \label{fig:un}
  \end{subfigure}
  \caption{Visualization of the epistemic uncertainty estimated by SDE-Net (darker colors represent higher uncertainties in the heat map). }
  \label{fig:vis}
\end{figure}

\subsection{Expected Calibration Error}
In this subsection, we measure the expected calibration error (ECE, \citep{Guo:2017}) to see if the confidences produced by the models are trustworthy.
Fig.~\ref{fig:ece} shows the ECE of each method on MNIST and SVHN. On MNIST, SDE-Net can achieve competitive results compared with DeepEnsemble and MC-dropout and outperforms other methods. On SVHN, SDE-Net outperforms all the baselines.

\begin{figure}[h!]
  \centering
  \begin{subfigure}[b]{0.4\linewidth}
    \includegraphics[width=\linewidth]{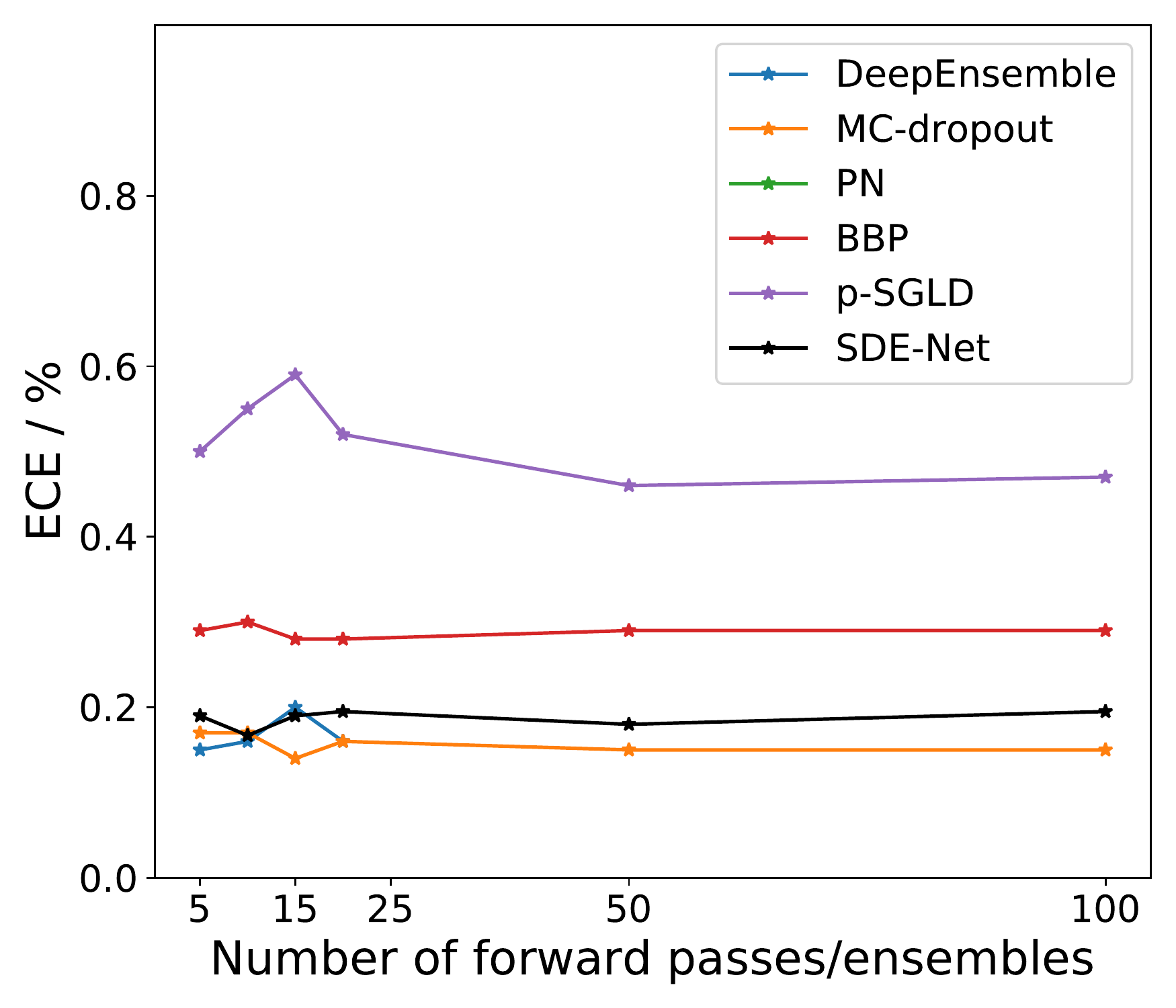}
    \caption{MNIST}
  \end{subfigure}
  \hspace{0.06\linewidth}
  \begin{subfigure}[b]{0.4\linewidth}
    \includegraphics[width=\linewidth]{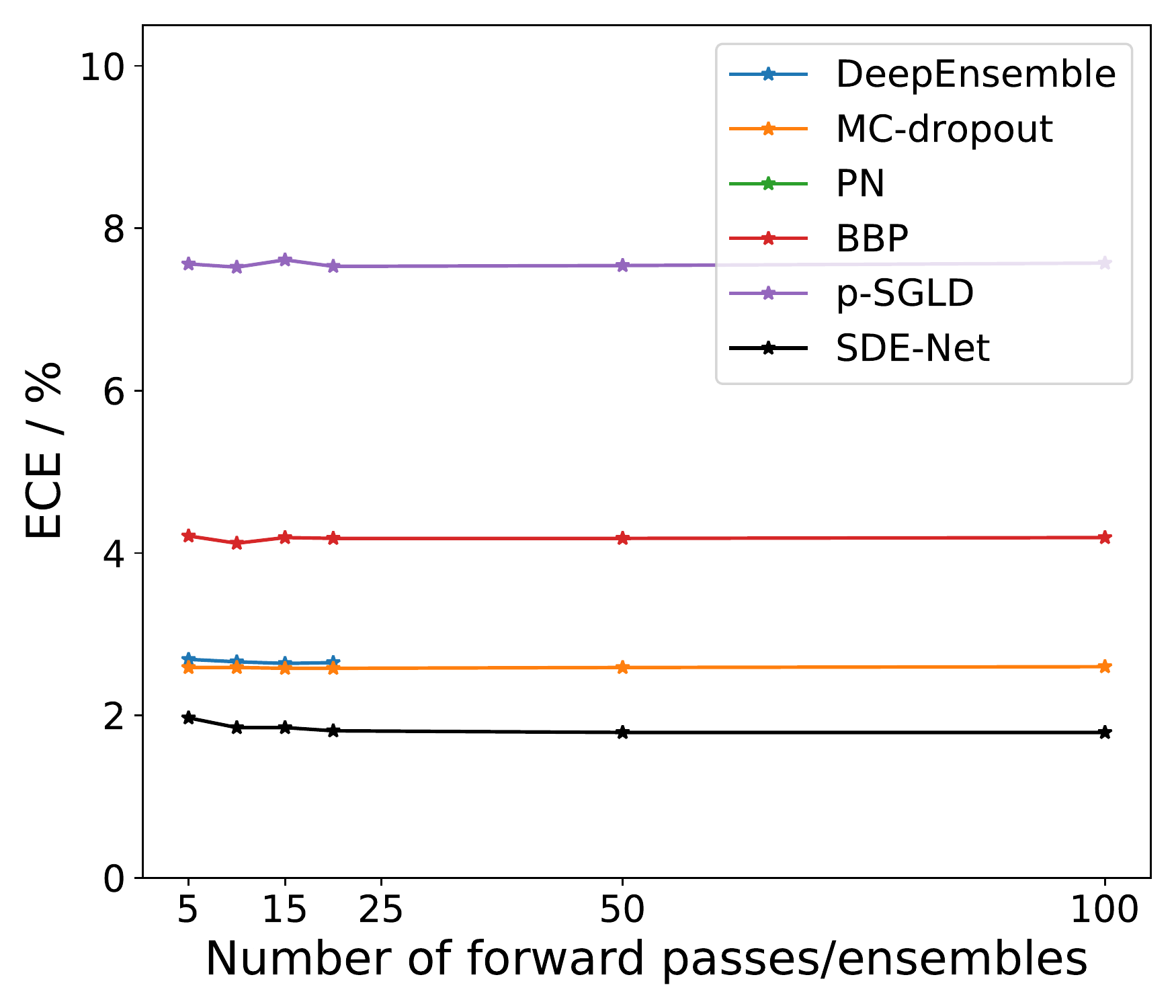}
    \caption{SVHN}
  \end{subfigure}
  \caption{Expected calibration error (ECE) vs number of forward passes/ensembles. PN is outside the range and not shown}
  \label{fig:ece}
\end{figure}

\subsection{Ablation Study}

\textbf{Robustness to different pseudo OOD data.}
In this set of experiments, we report additional experimental 
results for OOD detection in classification tasks. 
We use MNIST as the in-distribution training dataset, and explore using other data sources as OOD data beyond using in-distribution data perturbed by Gaussian noise. The results are shown in  Table \ref{OOD}. As we can see, the performance of PN is very poor when using Gaussian noise and training data perturbed by Gaussian noise. When using SVHN as OOD data during training, its performance is good. This suggests that PN is easy to be overfitted by the OOD data used in training. Our SDE-Net can achieve good performance in all settings, which shows its superior robustness.

\begin{table}[h!]
  \caption{Additional Results for OOD detection. MNIST is used as in-distribution training data. The OOD data used during training is in the bracket beside each model. Gaussian means directly sampling from $\mathcal{N}(0,1)$ as pseudo OOD data. Training+Gaussian means perturbing training data by Gaussian noise (0 mean and variance 4) as pseudo OOD data.
  SVHN means directly use the training set of SVHN as pseudo OOD data.
  We report the average performance and standard deviation for 5 random initializations.}
  \label{OOD}
  \centering
  \begin{tabular}{ccccccc}
  \hline
    \toprule
  OOD Data (test) & Model & \makecell{TNR \\at TPR $95\%$} & AUROC& \makecell{Detection \\accuracy} & \makecell{AUPR \\in} & \makecell{AUPR\\ out}\\
    \midrule
    \multirow{6}{*}{SVHN}&SDE-Net(SVHN)&$99.9\pm 0.0$ &$99.9 \pm 0.0 $ &$99.8 \pm 0.1$&$99.9 \pm 0.0 $&$99.9 \pm 0.0$  \\
    &SDE-Net(Gaussian)  & $99.4 \pm 0.1$ & $99.9 \pm0.0$ & $98.5 \pm 0.2$ & $99.7 \pm 0.1$ & $100.0 \pm 0.0$ \\
    &SDE-Net(training+Gaussian)  & $97.8\pm 1.1$ & $99.5\pm 0.2$ & $97.0\pm 0.2$ & $98.6 \pm 0.6$ & $99.8 \pm 0.1$ \\
    \cline{2-7}
    &PN(SVHN)  & $100.0\pm 0.0$ & $100.0\pm 0.0$ & $100.0\pm 0.0$ & $100.0\pm 0.0$ & $100.0\pm 0.0$ \\
      &PN(Gaussian)& $89.0\pm 2.9$ & $92.9 \pm 1.2$ & $92.3\pm 2.2$ & $68.1 \pm 6.5$ & $97.6\pm 0.7$ \\
   &PN(training+Gaussian)& $90.4 \pm 2.8$ & $94.1 \pm 2.2$ & $93.0\pm 1.4$ & $73.2\pm 7.3$ & $98.0\pm0.6$ \\
    \hline
    \multirow{6}{*}{SEMEION} 
    &SDE-Net(SVHN) &$100.0\pm 0.0$ &$99.9\pm 0.0$& $99.9\pm 0.0$&$100.0\pm 0.0$ &$99.0\pm 0.2$ \\    
    &SDE-Net(Gaussian)  & $99.9\pm 0.1$ & $100.0 \pm 0.0$ & $99.0 \pm 0.3$ & $100.0 \pm 0.0$ & $99.8 \pm 0.1$ \\
     &SDE-Net(training+Gaussian)  & $99.6\pm0.2$ & $99.9\pm0.1$ & $98.6\pm0.5$ & $100.0\pm0.0$ & $99.5\pm 0.3$ \\
     \cline{2-7}
     &PN(SVHN)  & $98.0\pm 0.8$ & $98.7\pm 0.3$ & $97.3\pm 1.2$ & $99.6\pm 0.1$ & $95.7\pm 2.3$ \\
      &PN(Gaussian)& $91.0\pm 2.3$ & $94.9\pm 2.6$ & $93.2\pm 1.5$ & $97.8\pm 0.6$ & $86.5\pm 3.5$ \\
   &PN(training+Gaussian)& $93.4\pm2.2$ & $96.1\pm 1.2$ & $94.5\pm 1.1$ & $98.4\pm0.7$ & $88.5\pm 1.3$ \\
    \hline
    \multirow{6}{*}{CIFAR10} 
    &SDE-Net(SVHN)&$100.0\pm 0.0$ &$99.9 \pm 0.0$& $99.7\pm 0.1$ &$99.9\pm 0.1$ &$99.8\pm 0.1$ \\
    &SDE-Net(Gaussian)  & $99.8\pm 0.1$ & $100.0\pm 0.0$ & $98.9 \pm 0.4$ & $100.0\pm 0.0$ & $100.0
    \pm 0.0$ \\
      &SDE-Net(training+Gaussian)  & $99.7\pm 0.2$ & $99.9\pm 0.0$ & $98.3\pm 0.4$ & $99.9\pm 0.0$ & $99.9\pm 0.0$ \\
      \cline{2-7}
        &PN(SVHN)  & $100.0 \pm 0.0$ & $100.0\pm 0.0$ & $99.8\pm 0.1$ & $100.0\pm 0.0$ & $100.0\pm 0.0$ \\
           &PN(Gaussian)& $96.8 \pm 1.2$ & $97.7\pm 0.7$ & $96.5 
           \pm 0.6$ & $94.3 \pm 1.2$ & $98.2 \pm 0.3$ \\
        &PN(training+Gaussian)& $97.6 \pm 0.7$ & $98.3 \pm 0.8$ & $97.0 \pm 1.2$ & $96.0\pm 1.7$ & $97.3 \pm 1.2$\\
    \bottomrule
    \hline
  \end{tabular}
\end{table}

\begin{table}[!h]
  \caption{Classification and out-of-distribution detection results on MNIST and SVHN. All values are in percentage, and larger values indicates better detection performance. We report the average performance and standard deviation for 5 random initializations. }
    \label{table:ablation:ood}
    \centering
    \begin{tabular}{cccccccccc}
      \toprule
  ID &  OOD & Model & \makecell{TNR \\at TPR $95\%$} & AUROC& \makecell{Detection \\accuracy} & \makecell{AUPR \\in} & \makecell{AUPR\\ out}\\
      \midrule
     \multirow{2}{*}{MNIST}& \multirow{2}{*}{SEMEION}
     &SDE-Net w.o. reg &$93.7\pm 1.1$ & $97.9\pm 0.4$ &$95.2\pm 0.9 $ & $99.8\pm 0.1$ & $89.8 \pm 1.2$\\
       &&SDE-Net &  $\bm{99.6}\pm 0.2$ & $\bm{99.9}\pm 0.1$ & $\bm{98.6}\pm 0.5$ & $\bm{100.0}\pm 0$ & $\bm{99.5}\pm 0.3$ \\
      \hline
      \multirow{2}{*}{MNIST}&\multirow{2}{*}{SVHN} 
      &SDE-Net w.o. reg & $90.3 \pm 1.3$ &$96.6 \pm 1.3$&$92.2 \pm 1.2$ & $90.0 \pm 2.2$ &  $98.2 \pm 0.4$\\
        &&SDE-Net  &  $\bm{97.8}\pm 1.1$ & $\bm{99.5}\pm 0.2$ & $\bm{97.0}\pm 0.2$ & $\bm{98.6}\pm 0.6$ & $\bm{99.8}\pm 0.1$ \\
        \hline
          \multirow{2}{*}{SVHN}&  \multirow{2}{*}{CIFAR10}
          &SDE-Net w.o. reg& $68.2 \pm 2.4$ & $93.9 \pm 0.7$&$90.3 \pm 0.9$&$97.2 \pm 0.7$ &$85.2 \pm 1.2$ \\
      &&SDE-Net  & $\bm{87.5}\pm 2.8$ & $\bm{97.8}\pm 0.4$ & $\bm{92.7}\pm 0.7$ & $\bm{99.2}\pm 0.2$ & $\bm{93.7}\pm 0.9$ \\
      \hline
      \multirow{2}{*}{SVHN}&\multirow{2}{*}{CIFAR100} &SDE-Net w.o. reg&$65.2 \pm 1.3$ &$92.9 \pm 0.9$&$88.7 \pm 0.6$& $97.2 \pm 0.3$ & $83.4 \pm 0.7$  \\
        &&SDE-Net & $\bm{83.4}\pm 3.6$& $\bm{97.0}\pm 0.4$ & $\bm{91.6}\pm 0.7$& $\bm{98.8}\pm 0.1$& $\bm{92.3}\pm 1.1$  \\
      \bottomrule
    \end{tabular}
\end{table}

\textbf{Is the OOD regularizer necessary?}
Our loss objective includes an OOD regularization term which allows us to explicitly train the 
epistemic uncertainty for each data point. This regularizer can be interpreted as our parameter belief from the data space. That is we want our model to give uncertain outputs for OOD data. 
To verify the necessity of this regularization term, we test the uncertainty estimates of SDE-Net trained without the regularizer. As we can see from Table.~\ref{table:ablation:ood}, 
the performance of SDE-Net deteriorates to the same level of traditional NNs without the regularizer term. In Bayesian neural network, the principle of Bayesian inference implicitly enables larger uncertainty in the region that lacks training data. 
Such inference can be costly and we choose to view the DNNs as stochastic dynamic systems. The benefit of such design is that we can directly model the epistemic uncertainty 
level for each data point by the variance of the Brownian motion.

\subsection{Full Results of Table. 2 and Table. 3 of the main paper }
Table.~\ref{table:OOD-regression} shows the full results of Table. 2 of the main paper.

Table.~\ref{table:mis} shows the full results of Table. 3 of the main paper.
\begin{table}[!h]
  \caption{Out-of-distribution detection for regression on Year Prediction MSD + Boston Housing. We report the average performance and standard deviation for 5 random initializations.}
      \label{table:OOD-regression}
      \centering
      \begin{tabular}{cccccccc}
      \toprule
      Model & \# Parameters &RMSE & \makecell{TNR \\at TPR $95\%$} & AUROC & \makecell{Detection \\accuracy}&\makecell{AUPR \\ in}&\makecell{AUPR\\ out}\\
      \midrule
      DeepEnsemble&14.9K$\times 5$ &$\bm{8.6} \pm$ NA& $10.9 \pm$ NA & $59.8\pm$ NA& $61.4 \pm $NA & $99.3\pm $NA &$1.3\pm$ NA    \\
      MC-dropout &14.9K & $8.7 \pm 0.0$& $9.6 \pm 0.4$ & $53.0\pm 1.2$& $55.6\pm 1.2$  & $99.2\pm 0.1$  &$1.1\pm 0.1$ \\
      BBP &30.0K &$9.5 \pm 0.2$ &$8.7 \pm 1.5$ & $56.8\pm 0.9$ & $58.3\pm 2.1$ &$99.0\pm 0.0$ &$1.3\pm 0.1$  \\
      p-SGLD &14.9K & $9.3 \pm 0.1$& $9.2\pm 1.5$ &$52.3\pm 0.7$ & $57.3\pm 1.9$ & $99.4 \pm 0.0$ & $1.1\pm 0.2$ \\
      SDE-Net & 12.4K& $8.7 \pm 0.1$& $\bm{60.4}\pm 3.7$ & $\bm{84.4}\pm 1.0$ & $\bm{80.0} \pm 0.9$ & $\bm{99.7} 
      \pm 0.0$& $\bm{21.3}\pm 4.1$ \\
      \bottomrule
      \end{tabular}
      \vspace{-3ex}
  \end{table}

  \begin{table}[!h]
    \caption{Misclassification detection performance on MNIST and SVHN. We report the average performance and standard deviation for 5 random initializations.}
      \label{table:mis}
      \centering
      \begin{tabular}{cccccccc}
        \toprule
     Data & Model & \makecell{TNR \\at TPR $95\%$} &AUROC & \makecell{Detection \\accuracy}&\makecell{AUPR \\succ} & \makecell{AUPR\\ err}\\
        \midrule
     \multirow{4}{*}{MNIST}
     &Threshold  & $85.4\pm 2.8$ & $94.3 \pm 0.9$& $92.1 \pm 1.5$ &$ 99.8 \pm 0.1$ &$31.9 \pm 8.3$ & \\
     &  DeepEnsemble& $89.6 \pm $NA  &$\bm{97.5} \pm$ NA&  $93.2 \pm $NA &$\bm{100.0}\pm$ NA&$41.4\pm$ NA    \\
            &MC-dropout&  $85.4 \pm 4.5$ &$95.8 \pm 1.3$& $91.5 \pm 2.2$  &$99.9 \pm 0.0$&$33.0\pm 6.7$  \\
        &PN & $85.4 \pm 2.8$& $91.8\pm 0.7 $&$91.0 \pm 1.1$ &$99.8\pm0.0$&$33.4 \pm 4.6$\\
            &BBP & $88.7\pm 0.9$ & $96.5 \pm 2.1$  &$93.1\pm 0.5$ & $\bm{100.0}\pm 0.0$ & $35.4\pm 3.2$\\
        &P-SGLD & $\bm{93.2} \pm 2.5$& $96.4\pm 1.7$ &$\bm{98.4}\pm 0.2$ &$\bm{100.0\pm}0.0$ & $\bm{42.0\pm}2.4$\\
        &SDE-Net&$88.5 \pm 1.3$ &$96.8\pm 0.9$ & $92.9 \pm 0.8$&$\bm{100.0 \pm} 0.0$&$36.6 \pm 4.6$   \\
        \hline
         \multirow{4}{*}{SVHN} 
         &Threshold  & $66.4 \pm 1.7$  &$90.1\pm 0.3$& $85.9 \pm 0.4$ &$99.3\pm 0.0$ & $42.8\pm 0.6$& \\
         &  DeepEnsemble& $\bm{67.2} \pm $NA &$91.0\pm$ NA& $86.6 \pm$ NA &$\bm{99.4}\pm$ NA &$46.5\pm$ NA    \\
            &MC-dropout& $65.3 \pm 0.4$ &$90.4\pm 0.6$&$85.5 \pm 0.6$ &$99.3\pm 0.0$ &$45.0\pm 1.2$  \\
        &PN &$64.5 \pm 0.7$ & $84.0\pm 0.4$ &$81.5 \pm 0.2$ &$98.2\pm 0.2$ & $43.9\pm 1.1$\\
        &BBP & $58.7 \pm 2.1$ &$91.8 \pm 0.2$  & $85.6 \pm 0.7$  & $99.1\pm 0.1$ & $50.7\pm 0.9 $\\
        &P-SGLD& $64.2 \pm 1.3 $ & $\bm{93.0}\pm 0.4$ & $\bm{87.1} \pm 0.4$ &$\bm{99.4}\pm 0.1$ & $48.6 \pm 1.8$\\
        &SDE-Net& $65.5 \pm 1.9$ &$92.3\pm 0.5$& $86.8 \pm 0.4$&$\bm{99.4\pm}0.0$&$\bm{53.9}\pm 2.5$  \\
        \bottomrule
      \end{tabular}
  \end{table}

  \newpage
  \newpage
  \newpage
  \section{Network Architecture}
  \subsection{Classification Task}
  Downsampling layer:
  
  \begin{python}
          self.downsampling_layers = nn.Sequential(
          #change the in planes to 3 for SVHN
              nn.Conv2d(1, dim, 3, 1), 
              norm(dim),
              nn.ReLU(inplace=True),
              nn.Conv2d(dim, dim, 4, 2, 1),
              norm(dim),
              nn.ReLU(inplace=True),
              nn.Conv2d(dim, dim, 4, 2, 1),
          )
  \end{python}
  Drift neural network:
  \begin{python}
  class Drift(nn.Module):
      def __init__(self, dim):
          super(Drift, self).__init__()
          self.norm1 = norm(dim)
          self.relu = nn.ReLU(inplace=True)
          self.conv1 = ConcatConv2d(dim, dim, 3, 1, 1)
          self.norm2 = norm(dim)
          self.conv2 = ConcatConv2d(dim, dim, 3, 1, 1)
          self.norm3 = norm(dim)
  
      def forward(self, t, x):
          out = self.norm1(x)
          out = self.relu(out)
          out = self.conv1(t, out)
          out = self.norm2(out)
          out = self.relu(out)
          out = self.conv2(t, out)
          out = self.norm3(out)
          return out 
  \end{python}
  
  Diffussion neural network for MNIST:
  
  \begin{python}
  class Diffusion(nn.Module):
      def __init__(self, dim_in, dim_out):
          super(Diffusion, self).__init__()
          self.norm1 = norm(dim_in)
          self.relu = nn.ReLU(inplace=True)
          self.conv1 = ConcatConv2d(dim_in, dim_out, 3, 1, 1)
          self.norm2 = norm(dim_in)
          self.conv2 = ConcatConv2d(dim_in, dim_out, 3, 1, 1)
          self.fc = nn.Sequential(norm(dim_out), nn.ReLU(inplace=True), nn.AdaptiveAvgPool2d((1, 1)), Flatten(), nn.Linear(dim_out, 1), nn.Sigmoid())
      def forward(self, t, x):
          out = self.norm1(x)
          out = self.relu(out)
          out = self.conv1(t, out)
          out = self.norm2(out)
          out = self.relu(out)
          out = self.conv2(t, out)
          out = self.fc(out)
          return out
  \end{python}
  
  Diffusion network for SVHN:
  
  \begin{python}
  class Diffusion(nn.Module):
      def __init__(self, dim_in, dim_out):
          super(Diffusion, self).__init__()
          self.norm1 = norm(dim_in)
          self.relu = nn.ReLU(inplace=True)
          self.conv1 = ConcatConv2d(dim_in, dim_out, 3, 1, 1)
          self.norm2 = norm(dim_in)
          self.conv2 = ConcatConv2d(dim_in, dim_out, 3, 1, 1)
          self.norm3 = norm(dim_in)
          self.conv3 = ConcatConv2d(dim_in, dim_out, 3, 1, 1)
          self.fc = nn.Sequential(norm(dim_out), nn.ReLU(inplace=True), nn.AdaptiveAvgPool2d((1, 1)), Flatten(), nn.Linear(dim_out, 1), nn.Sigmoid())
      def forward(self, t, x):
          out = self.norm1(x)
          out = self.relu(out)
          out = self.conv1(t, out)
          out = self.norm2(out)
          out = self.relu(out)
          out = self.conv2(t, out)
          out = self.norm3(out)
          out = self.relu(out)
          out = self.conv3(t, out)
          out = self.fc(out)
          return out
  \end{python}

  ResNet block architecture:
  
  \begin{python}
  class ResBlock(nn.Module):
      expansion = 1
      def __init__(self, inplanes, planes, stride=1, downsample=None):
          super(ResBlock, self).__init__()
          self.norm1 = norm(inplanes)
          self.relu = nn.ReLU(inplace=True)
          self.downsample = downsample
          self.conv1 = conv3x3(inplanes, planes, stride)
          self.norm2 = norm(planes)
          self.conv2 = conv3x3(planes, planes)
  
      def forward(self, x):
          shortcut = x
  
          out = self.relu(self.norm1(x))
  
          if self.downsample is not None:
              shortcut = self.downsample(out)
  
          out = self.conv1(out)
          out = self.norm2(out)
          out = self.relu(out)
          out = self.conv2(out)
  
          return out + shortcut
  \end{python}
  
  For BBP, we use an identical Residue block architecture and a fully factorised Gaussian approximate posterior on the weights.

  \subsection{Regression Task} 
  The network architecture for DeepEnsemble, MC-dropout and p-SGLD:
  \begin{python}
  class DNN(nn.Module):
      def __init__(self):
          super(DNN, self).__init__()
          self.fc1 = nn.Linear(90, 50)
          self.dropout1 = nn.Dropout(0.5)
          self.fc2 = nn.Linear(50, 50)
          self.dropout2 = nn.Dropout(0.5)
          self.fc3 = nn.Linear(50, 50)
          self.dropout3 = nn.Dropout(0.5)
          self.fc4 = nn.Linear(50, 50)
          self.dropout4 = nn.Dropout(0.5)
          self.fc5 = nn.Linear(50, 50)
          self.dropout5 = nn.Dropout(0.5)
          self.fc6 = nn.Linear(50, 2)
  
      def forward(self, x):
          x = self.dropout1(F.relu(self.fc1(x)))
          x = self.dropout2(F.relu(self.fc2(x)))
          x = self.dropout3(F.relu(self.fc3(x)))
          x = self.dropout4(F.relu(self.fc4(x)))
          x = self.dropout5(F.relu(self.fc5(x)))
          x = self.fc6(x)
          mean = x[:,0]
          sigma = F.softplus(x[:,1])+1e-3
          return mean, sigma
  \end{python}
  
  For BBP, we use an identical architecture with a fully factorised Gaussian approximate posterior on the weights.

  For SDE-Net:
  
  Drift neural network:
  \begin{python}
  class Drift(nn.Module):
      def __init__(self):
          super(Drift, self).__init__()
          self.fc = nn.Linear(50, 50)
          self.relu = nn.ReLU(inplace=True)
      def forward(self, t, x):
          out = self.relu(self.fc(x))
          return out 
  \end{python}
  
  Diffusion neural network:
  \begin{python}
  class Diffusion(nn.Module):
      def __init__(self):
          super(Diffusion, self).__init__()
          self.relu = nn.ReLU(inplace=True)
          self.fc1 = nn.Linear(50, 100)
          self.fc2 = nn.Linear(100, 1)
      def forward(self, t, x):
          out = self.relu(self.fc1(x))
          out = self.fc2(out)
          out = F.sigmoid(out)
          return out
  \end{python}



\end{document}